\newcommand{\refsec}[1]{Section ``\nameref{#1}''}
\newcommand{\refsecs}[2]{Sections ``\nameref{#1}'' and ``\nameref{#2}''}
\renewcommand{\maketitle}{\bgroup\setlength{\parindent}{0pt}
\begin{flushleft}
  \textbf{\@title}
  
  \@author
\end{flushleft}\egroup}
\title{Efficient Model Selection for Predictive Pattern Mining Model by Safe Pattern Pruning}
\date{}
\author[1]{Takumi Yoshida}
\author[2,*]{Hiroyuki Hanada}
\author[1]{Kazuya Nakagawa}
\author[3]{Kouichi Taji}
\author[2,4]{Koji Tsuda}
\author[2,3,*]{Ichiro Takeuchi}
\affil[1]{Nagoya Institute of Technology, Nagoya, Aichi, Japan}
\affil[2]{RIKEN, Wako, Saitama, Japan}
\affil[3]{Nagoya University, Nagoya, Aichi, Japan}
\affil[4]{The University of Tokyo, Bunkyo-ku, Tokyo, Japan}
\affil[*]{Co-correspondence: hiroyuki.hanada@riken.jp, ichiro.takeuchi@mae.nagoya-u.ac.jp}
\begin{document}

\maketitle

\section*{Summary}

Predictive pattern mining is an approach used to construct prediction models when the input is represented by structured data, such as sets, graphs, and sequences.
The main idea behind predictive pattern mining is to build a prediction model by considering substructures, such as subsets, subgraphs, and subsequences (referred to as patterns), present in the structured data as features of the model.
The primary challenge in predictive pattern mining lies in the exponential growth of the number of patterns with the complexity of the structured data.
In this study, we propose the Safe Pattern Pruning (SPP) method to address the explosion of pattern numbers in predictive pattern mining.
We also discuss how it can be effectively employed throughout the entire model building process in practical data analysis.
To demonstrate the effectiveness of the proposed method, we conduct numerical experiments on regression and classification problems involving sets, graphs, and sequences.
%

\section*{Keywords}

Predictive pattern mining, Itemset mining, Sequence mining, Graph mining, Sparse learning, Safe screening, Convex optimization

\section*{Introduction} \label{sec:introduction}


In various practical problems, it is necessary to handle \emph{structure data} such as \emph{sets}, \emph{graphs}, and \emph{sequences}.
For example, in the field of life sciences, interactions between different genes are represented as sets, drug-able chemical compounds are represented as graphs, and amino acid sequences that make up proteins are represented as sequence data.
In this paper, we consider prediction problems such as regression and classification when the input is structure data.
In the aforementioned cases, problems such as predicting the presence or absence of a disease based on interactions between genes, predicting the effectiveness of drugs based on chemical compound structures, and predicting allergic reactions based on the amino acid sequences of food proteins are examples of our target applications.
In predictive modeling for structure data, the challenge is how to represent the structure data so that they can be fed into machine learning framework.
In this paper, we consider a class of machine learning models called \emph{predictive pattern mining}~\citep{saigo2007mining,saigo2009linear,saigo2009gboost,duverle2013discovering,suzumura2017selective,takayanagi2018entire,yoshida2018safe,le2018whinter,yoshida2019learning,le2020stat,hazimeh2020learning,bunker2021supervised,das2022fast,kato2022safe}.

There are mainly three types of machine learning approaches for structure data.
The first approach is the kernel-based approach. 
In this approach, a kernel function that can quantify similarities between sets, graphs, sequences, and other structures are introduced, and it is used together with kernel-based machine learning methods such as Support Vector Machines and Gaussian Process.
A variety of kernel functions specialized for each type of structures have been proposed and used in practical problems~\citep{gartner2003graph,kashima2003marginalized,vishwanathan2010graph,shervashidze2009efficient,tsuda2002marginalized,lodhi2002text,leslie2001spectrum}.
The second approach is deep learning-based approach.
This approach employs neural network models with special types of input and hidden layers that are designed to handle the inputs in the form of sets, graphs, sequences, and other structures.
For example, PointNet is a neural network especially developed for set data~\citep{qi2017pointnet,lee2019set}, graph neural networks are used for graph data~\citep{scarselli2008graph,kipf2016semi,velivckovic2017graph,zhou2020graph,xu2018powerful}, and there are many neural network architectures for sequence data such as recurrent neural networks and LSTM~\citep{hochreiter1997long,graves2013generating,sutskever2014sequence,vaswani2017attention}.
These two approaches are effective when we are only interested in prediction.
In many practical problems, however, simply having good predictive performance is not enough.
In prediction modeling for structure data, knowledge extraction such as identifying the sub-structures that contribute to the prediction is required for the explanation and interpretation.

The third approach is predictive pattern mining, which is the subject of this paper.
In contrast to the two aforementioned approaches, knowledge extraction is possible in predictive pattern mining approach.
A common feature among many types of structure data such as sets, graphs, and sequences is that they can be decomposed into sub-structures.
For example, considering a set of three genes $\{g_A, g_B, g_C\}$ as a set data, it has the following sub-structures:
\begin{align*}
\emptyset,
\{g_A\},
\{g_B\},
\{g_C\},
\{g_A, g_B\},
\{g_A, g_C\},
\{g_B, g_C\},
\{g_A, g_B, g_C\},
\end{align*}
where
$\emptyset$
is the empty set. 
If we consider a predictive model that takes a set as an input, it is possible to extract knowledge by knowing which sub-structures contribute significantly to the prediction.
In this paper, these substructures are called \emph{patterns}.
Fig.~\ref{fig:pattern_examples} shows examples of patterns for set, graph, and sequence data.

The difficulty in predictive pattern mining lies in the computational complexity of efficiently handling exponentially increasing number of patterns.
In any of the set, graph, or sequence structure data discussed so far, the number of all possible sub-structures (patterns) are huge, making it difficult to consider a machine learning model that naively treats all possible patterns as features.
Since the patterns that affect predictions are often only a small part of the vast number of pattern features, the basic strategy for predictive pattern mining is to efficiently identify relevant pattern features and remove irrelevant pattern features that do not affect predictions.
In the field of pattern mining, algorithms utilizing the fact that patterns can be represented in a tree structure have been exploited for tasks such as enumerating frequently occurring patterns (Fig. \ref{fig:tree_pruning}).
Our main contribution in this paper is to propose a method for efficiently finding patterns that significantly contribute to predictions by using the tree representation of patterns, similar to other pattern mining methods.

To this end, we propose a novel technique called \emph{safe pattern pruning (SPP)} by combining \emph{safe screening}, which has been developed in the field of sparse modeling, and pattern mining that utilizes the tree-based pattern representations.
In the SPP method, we consider the sparse estimation of linear models that can have any pattern as a feature for predictive pattern mining model, and identify a specific set of patterns where the coefficients become zero in the optimal solution.

This paper is an extended version of the preliminary conference proceeding presented by a part of the authors~\citep{nakagawa2016safe}. The entire content of \refsec{sec:sec4} is a novel addition to this paper, and a significant portion of the experiments described in \refsec{sec:experiments} were newly conducted in this paper.
In this paper, we newly introduce methods to effectively perform model selection by effectively using the SPP, provide a software that can comprehensively handle regression and classification problems with sets, graphs, and sequences, and present new numerical experimental results.

\section*{Results} 

%

\subsection*{Notations}
\label{sec:notation-outline}
We use the following notations in the rest of the paper. 
For any natural number $n$, we define $[n] \coloneqq \{1, \ldots, n\}$. 
For a matrix $Z\in\mathbb R^{n\times m}$, its $i$-th row and $j$-th column are denoted as $\bm z_{i:}$ and $\bm z_{:j}$, respectively, for $i \in [n]$ and $j \in [m]$. 
The $L_1$ norm, the $L_2$ norm of a vector $v \in \RR^n$ are defined as $\|v\|_1 \coloneqq \sum_{i \in [n]} |v_i|$ and $\|v\|_2 \coloneqq \sqrt{\sum_{i \in [n]} |v_i|^2}$, respectively.
In addition, the symbols $\bm 0$ and $\bm 1$ denote vectors of suitable dimensionality, where all of their elements are equal to 0 and 1, respectively.

\subsection*{Problem Setup: Predictive Pattern Mining}
Let $\mathcal S$ be the input space for structure data, e.g., $\mathcal S$ is a collection of sets, graphs, and, sequences.
We assume that a certain partial order $\sqsubset$ on $\mathcal S$ is defined, which represents an inclusion relationship.
For example, in the case of subset mining, we can use well-known subset operator $\subset$ as $\sqsubset$.
Let $\mathcal Y$ be the space of response (i.e., output space).
We consider $\mathcal Y \subseteq \mathbb R$ for regression problems and $\mathcal Y = \{-1, +1\}$ for binary classification problems.

We denote the training dataset with $n$ instances as $\mathcal D = \{(S_i, y_i)\}_{i\in[n]}$, where $S_i \in \cS$ and $y_i \in \cY$ are the pair of structure input and response for the $i$-th training instance. 
Let $\cP$ be the set of all sub-structures (patterns) contained in $\{S_i\}_{i \in [n]}$, and $p_1, p_2, \ldots, p_d \in \cP$ be the elements of $\cP$. 
Here, $d$ is the total number of patterns, which increases exponentially with the complexity of structure inputs. 
In predictive pattern mining, we consider a generalized linear model in the form of
\begin{align}
 \label{eq:linear-model}
 g(y_i) = \beta_0 + \beta_1 x_{i1} + \beta_2 x_{i2} + \cdots + \beta_d x_{id}, i \in [n], 
\end{align}
where $x_{ij}$ equals 1 if the $i$-th input structure has the $j$-th pattern and 0 otherwise, $\beta_0$ and $\{\beta_j\}_{j \in [d]} \in \RR$ are the model coefficient, and $g$ is the link function for generalized linear model.
With this notation, the training set for the predictive pattern mining is also represented as $(X, \bm y)$ where $X \in \{0, 1\}^{n \times d}$ is the matrix with $x_{ij}$ being the $(i, j)$-th element and $\bm y \in \RR^n$ is the vector with $y_i$ being the $i$-th response.
Furthermore, we denote the $i$-th row of $X$ as $\bm x_i$ and a vector of coefficients as $\bm \beta = [\beta_1, \ldots, \beta_d] \in \RR^d$. 
Then, the linear model in Eq.\eq{eq:linear-model} is simply written as $\bm y = X \bm \beta + \bm 1 \beta_0$ or $y_i = \bm x_i^\top \bm \beta + \beta_0$, $i \in [n]$. 

The goal of predictive pattern mining is to find the vector of coefficients parameters $(\bm\beta, \beta_0) \in \RR^d \times \RR$ that minimizes the following class of loss function: 
\begin{equation}
 \begin{split}
  \bm\beta^* & \coloneqq \argmin_{\bm\beta\in\mathbb R^d, \beta_0\in\mathbb R} P(\bm\beta, \beta_0),\\
  P(\bm\beta, \beta_0) & \coloneqq L_{\bm y}(X\bm\beta + \bm 1 \beta_0) + \Omega_{(\lambda, \kappa)}(\bm\beta),
 \end{split}\label{eq:primal_problem}
\end{equation}
where $L_{\bm y}:\mathbb R^n\rightarrow\mathbb R$ is a convex loss function with Lipschitz continuous gradient, and $\Omega_{(\lambda, \kappa)}$ is a convex regularization function.
In this paper, we focus on the following regularization function as $\Omega_{(\lambda, \kappa)}$, which is called Elastic Net regularization~\citep{zou2005regularization}: 
\begin{equation}
 \label{eq:elasticnet-regularization}
 \Omega_{(\lambda, \kappa)}(\bm\beta) = \lambda \left(\|\bm\beta\|_1 + \frac{\kappa}{2}\|\bm\beta\|_2^2\right),
\end{equation}
where $\lambda > 0$ and $\kappa \ge 0$ are the hyper-parameters for tuning the strength of regularization.
When using this regularization function, a sparse solution is obtained, meaning that many coefficients $\beta^*_j$ shrink to zero in the optimal solution.

The dual problem of \eqref{eq:primal_problem} is written as
\begin{equation}
 \begin{split}
  \bm\alpha^* &\coloneqq\max_{\bm\alpha\in\mathbb R^n,~\bm\alpha^\top\bm 1 = 0} D(\bm\alpha),\\
  D(\bm\alpha) &\coloneqq -L^*(- \bm\alpha) - \Omega^*_{(\lambda, \kappa)}(X^\top\bm\alpha),
 \end{split}\label{eq:dual_problem}
\end{equation}
where $f^*$ indicates the convex conjugate of a convex function $f$, which is defined as follows.
\begin{defi}[convex conjugate]
    Let $f:\mathbb R^n\rightarrow\mathbb R$ be a convex function, the convex conjugate $f^*: \mathbb R^n\rightarrow\mathbb R$ is defined as 
    \[
        f^*(\bm v) = \sup_{\bm u\in\mathbb R^n}\left\{ \bm u^\top\bm v - f(\bm u) \right\}.
    \]
\end{defi}
We refer to the optimization problem in \eqref{eq:primal_problem} as the primal problem.
In this paper, we efficiently solve the predictive pattern mining problem by effectively combining the primal and the dual problems.

\subsection*{Sparse Learning and Safe Screening}

A class of methods for obtaining sparse solutions by using a sparsity-inducing regularization term such as \eq{eq:elasticnet-regularization} is called \emph{sparse learning}.
In sparse learning, the set of features whose optimal solution is non-zero is called the active set, and is denoted by 
\begin{align*}
 \cA^* := \left\{ j \in [d] \mid \beta^*_j \neq 0 \right\}
\end{align*}
One characteristic of sparse learning is that the optimal solution for a dataset containing the features in the active set gives the same optimal solution obtained for a dataset containing all features.
Concretely, let us consider a superset $\cA \supseteq \cA^*$ of the active set and a dataset $(X_{\cA}, \bm y)$ containing only the features belonging to $\cA$.
Then, the optimal solution for this dataset 
\begin{align*}
 \bm \beta^*(\cA) := \argmin_{\bm \beta \in \RR^{|\cA|}, \beta_0\in\mathbb R}  L_{\bm y}(X_\cA \bm \beta + \bm 1 \beta_0) + \Omega_{(\lambda, \kappa)}(\bm \beta)
\end{align*}
has a property that 
\begin{align*}
 \beta^*_j(\cA) = \beta^*_j ~ \forall j \in \cA
\end{align*}
This property implies that, if we can obtain a superset that contains the active set, it is sufficient to solve the optimization problem for a smaller dataset with smaller number of features. 

In general, active set cannot be obtained until the optimization problem is solved.
However, by using an approach called \emph{safe screening}, there is a case where it is possible to identify features that cannot be active in the optimal solution, i.e., features for which $\beta^*_j=0$, before solving the optimization problem.
Specifically, by using the convex optimization theory, it can be shown that there is a relationship between the optimal solutions of the primal problem \eq{eq:primal_problem} and the dual problem \eq{eq:dual_problem}, expressed as
\begin{equation}
 |\Xj^\top\bm\alpha^*| < \lambda \Longrightarrow \beta^*_j = 0, \quad \forall j\in[d].
 \label{eq:optimality}
\end{equation}
The basic idea of safe screening is to compute an upper bound on $|\Xj^\top\bm\alpha^*|$ in \eq{eq:optimality}. 
If the upper bound is smaller than $\lambda$, the condition in \eqref{eq:optimality} is satisfied, meaning that the optimal solution of the corresponding primal problem is $\beta^*_j=0$ and this feature can be removed beforehand.
Safe screening was proposed by \citep{ghaoui2010safe}, and since then, improvements to the method have been made \citep{wang2013lasso,fercoq2015mind,ndiaye2017gap,xiang2016screening,bonnefoy2015dynamic}, and its range of applications has been expanded~\citep{ogawa2013safe,shibagaki2015regularization,shibagaki2016simultaneous,hanada2018efficiently,okumura2015quick,ndiaye2019safe,takada2016secure}.

Among several options, we employ a safe screening method called \emph{GAP safe screening}~\citep{fercoq2015mind,ndiaye2017gap}. 
The basic idea of GAP safe screening is to use an arbitrary primal feasible solution $\tilde{\bm \beta} \in \RR^d$ and an arbitrary dual feasible solution $\tilde{\bm \alpha} \in \RR^n$ to compute an upper bound on $|\Xj^\top\bm\alpha^*|$.
The following lemma states that, given a pair of primal and dual feasible solutions $(\tilde{\bm \beta}, \tilde{\bm \alpha})$, it is possible to determine the range of the dual optimal solution $\bm \alpha^*$. 
\begin{lemm}[GAP Safe Screening Rule]
 \label{lem:safe_screening_rule}
 Suppose that $\nabla L$ is Lipschitz continuous with constant $\gamma > 0$.
 For any pair of feasible solution $(\bzero, \azero)$ and $j\in[d]$, let us define, what is called, \emph{safe screening score} as follows:
 \begin{align}
  \label{eq:safe_screening_score}
  u_j(\bzero, \azero)\coloneqq |\Xj^\top\bm\azero| + r(\bzero, \azero)\|\Xj - \Pi_{\bm 1}(\Xj)\|_2 . 
 \end{align}
 Then, 
 \begin{align}
  \label{eq:safe_screening_rule}
  u_j(\bzero, \azero) < \lambda ~\Rightarrow~ \beta_j^* = 0,
 \end{align}
 where
 \begin{align*}
  & r(\bzero, \azero) \coloneqq \sqrt{2\gamma(P(\bzero) - D(\azero))}, \\
  & \Pi_{\bm u}(\bm v) \coloneqq \frac{\bm u^\top\bm v}{\bm u^\top\bm u}\bm u,\;\bm u, \bm v\in\mathbb R^n.
 \end{align*}
 ($\Pi_{\bm u}(\bm v)$ is known as the {\em projection} of $\bm v$ onto $\bm u$.)
\end{lemm}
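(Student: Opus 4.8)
The plan is to run the standard GAP safe screening argument, adapted to the unpenalized intercept $\beta_0$, which appears in the dual as the equality constraint $\bm 1^\top\bm\alpha = 0$. It rests on two pillars: (i) the duality gap $P(\bzero) - D(\azero)$ at an arbitrary feasible pair controls the Euclidean distance from $\azero$ to the (unknown) dual optimum $\bm\alpha^*$, confining $\bm\alpha^*$ to an explicit ball (the ``safe region''); and (ii) intersecting this ball with the dual feasibility hyperplane lets us bound $|\Xj^\top\bm\alpha^*|$, after which the optimality relation \eqref{eq:optimality} yields $\beta_j^* = 0$ whenever that bound is below $\lambda$.

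First I would establish strong concavity of the dual objective. Because $\nabla L$ is $\gamma$-Lipschitz, its conjugate $L^*$ is $1/\gamma$-strongly convex, so $\bm\alpha \mapsto -L^*(-\bm\alpha)$ is $1/\gamma$-strongly concave; adding the concave term $-\Omega^*_{(\lambda,\kappa)}(X^\top\bm\alpha)$ keeps $D$ $1/\gamma$-strongly concave. Writing the strong-concavity inequality at $\bm\alpha^*$ and using that $\bm\alpha^*$ maximizes $D$ over the affine set $\{\bm\alpha:\bm 1^\top\bm\alpha=0\}$ (so some supergradient of $D$ at $\bm\alpha^*$ pairs nonpositively with every feasible direction, in particular with $\azero - \bm\alpha^*$) gives
\[
 D(\azero) \le D(\bm\alpha^*) - \tfrac{1}{2\gamma}\|\azero - \bm\alpha^*\|_2^2 .
\]
Combining this with weak duality $D(\bm\alpha^*) \le P(\bzero)$ and rearranging yields $\|\bm\alpha^* - \azero\|_2 \le \sqrt{2\gamma(P(\bzero)-D(\azero))} = r(\bzero,\azero)$.

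Next I would exploit dual feasibility. Both $\bm\alpha^*$ and $\azero$ satisfy $\bm 1^\top\bm\alpha = 0$, hence $\bm\alpha^* - \azero \perp \bm 1$, so $\Pi_{\bm 1}(\Xj)^\top(\bm\alpha^* - \azero) = 0$ and therefore $\Xj^\top(\bm\alpha^* - \azero) = (\Xj - \Pi_{\bm 1}(\Xj))^\top(\bm\alpha^* - \azero)$. The triangle inequality and Cauchy--Schwarz then give
\[
 |\Xj^\top\bm\alpha^*| \le |\Xj^\top\azero| + \|\Xj - \Pi_{\bm 1}(\Xj)\|_2\,\|\bm\alpha^* - \azero\|_2 \le |\Xj^\top\azero| + r(\bzero,\azero)\,\|\Xj - \Pi_{\bm 1}(\Xj)\|_2 = u_j(\bzero,\azero).
\]
Thus $u_j(\bzero,\azero) < \lambda$ implies $|\Xj^\top\bm\alpha^*| < \lambda$, and \eqref{eq:optimality} forces $\beta_j^* = 0$, which is the claim.

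The main obstacle is the first part: cleanly transferring $\gamma$-smoothness of $L$ into $1/\gamma$-strong concavity of $D$ through the conjugate, and making the first-order optimality step rigorous when $D$ is not differentiable (e.g. $\kappa = 0$, where $\Omega^*$ is an indicator), which needs subgradient and normal-cone calculus on the feasible affine subspace, all while correctly carrying the equality constraint induced by the unpenalized intercept $\beta_0$; that constraint is exactly what allows $\|\Xj\|_2$ to be replaced by the tighter projected quantity $\|\Xj - \Pi_{\bm 1}(\Xj)\|_2$ in the screening score.
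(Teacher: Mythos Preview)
Your proposal is correct and follows the same two-pillar structure as the paper's proof: first confine $\bm\alpha^*$ to the ball of radius $r(\bzero,\azero)$ around $\azero$ via strong concavity of $D$, then bound $|\Xj^\top\bm\alpha^*|$ and invoke \eqref{eq:optimality}. The paper's argument for pillar (i) is essentially identical to yours (it phrases the optimality step as $\bm g^\top(\bm\alpha^*-\azero)\ge 0$ for a subgradient $\bm g\in\partial D(\bm\alpha^*)$, and it invokes strong duality $P(\bm\beta^*)=D(\bm\alpha^*)$ where you use only weak duality; your version is if anything lighter). The real difference is in pillar (ii): the paper sets up the constrained problem $\max_{\bm\alpha\in B\cap H}|\Xj^\top\bm\alpha|$ and solves it exactly via the KKT conditions, obtaining the \emph{equality} $\max_{\bm\alpha\in B\cap H}|\Xj^\top\bm\alpha| = |\Xj^\top\azero| + r\|\Xj-\Pi_{\bm 1}(\Xj)\|_2$, whereas you bypass the optimization entirely by observing $\bm\alpha^*-\azero\perp\bm 1$, replacing $\Xj$ by $\Xj-\Pi_{\bm 1}(\Xj)$ in the inner product, and applying Cauchy--Schwarz. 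Your route is shorter and needs no Lagrangian machinery, which is all the lemma requires; the paper's route additionally certifies that the safe screening score is \emph{tight} (i.e., actually attained over $B\cap H$), a fact the paper later leans on when comparing single- and multiple-ball screening regions.
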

The proof of Lemma~\ref{lem:safe_screening_rule} is presented in Note S1.
This lemma indicates that, given a pair of primal and dual feasible solutions $(\tilde{\bm \beta}, \tilde{\bm \alpha})$, we can first compute the upper bound
$u_j(\bzero, \azero)$
for each
$j \in [d]$,
and remove the feature
if 
$u_j(\bzero, \azero) < \lambda$.

Our basic idea is to apply this GAP safe screening to predictive pattern mining.
However, since the number of all possible features $d$ is exponentially increasing, it is impossible to compute the upper bound $u_j(\bzero, \azero)$ for each pattern.
To address this challenge, in the next section, we extend the safe screening rule so that it can identify a group of removable patterns at once.

\subsection*{Safe Pattern Pruning (SPP)}
\label{sec:proposed_method}
The basic idea of SPP is to represent the relationship among patterns in a tree (Fig. \ref{fig:tree_pruning}) and identify a group of patterns for which the optimal coefficients satisfy $\beta^*_j=0$ by pruning the tree. 
To obtain the pruning rule, we exploit monotonicity of patterns, i.e., the occurrence of patterns decreases monotonically as pattern grows in the tree. 
We describe this property in the following lemma more specifically.
\begin{lemm}[Monotonicity of patterns]
 \label{lem:monotnicity}
 Let $p_j, p_k\in\mathcal S$ be patterns in $\{S_i\}_{i \in [n]}$ such that $p_k \sqsubset p_j$.
 Then, for any $i\in[n]$,
 \[
 x_{ik} = 1 ~\Rightarrow~ x_{ij} = 1.
 \]
\end{lemm}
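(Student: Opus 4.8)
The statement is purely order-theoretic, so the plan is to derive it directly from the definition of the occurrence indicators together with the transitivity of the inclusion order $\sqsubset$. The quantity $x_{ik}$ is, by definition, the indicator of whether $S_i$ ``has'' the pattern $p_k$, so I would first make this precise by writing the event $x_{ik}=1$ as an inclusion between $p_k$ and $S_i$ under $\sqsubset$, and $x_{ij}=1$ as the same inclusion with $p_k$ replaced by $p_j$. The hypothesis supplies a second inclusion, $p_k \sqsubset p_j$, between the two patterns themselves. The entire claim then reduces to composing these two inclusions, so the substance of the proof is a single transitive step rather than any numerical estimate.

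Concretely, I would fix an arbitrary $i \in [n]$ and assume $x_{ik}=1$, i.e.\ the occurrence inclusion for $p_k$ holds. I would then invoke the hypothesis $p_k \sqsubset p_j$, and, using that $\sqsubset$ is a partial order and hence transitive, chain the occurrence inclusion for $p_k$ with $p_k \sqsubset p_j$ to obtain the corresponding inclusion for $p_j$, which is exactly the condition defining $x_{ij}=1$. Since $i \in [n]$ was arbitrary, this yields $x_{ik}=1 \Rightarrow x_{ij}=1$ for every instance, as claimed. No case split and no appeal to the particular $0/1$ values are needed; everything collapses to one application of transitivity on $\sqsubset$.

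The single delicate point, and the step I would check most carefully, is the orientation of the occurrence relation relative to $\sqsubset$, namely whether ``$S_i$ has pattern $p_m$'' is recorded as $p_m \sqsubseteq S_i$ or as $S_i \sqsubseteq p_m$. This matters because only one orientation lets the chain $p_k \sqsubset p_j$ carry an occurrence of $p_k$ upward to $p_j$ and thereby deliver the printed conclusion $x_{ij}=1$; the opposite orientation would transport occurrence in the other direction. I would therefore state the occurrence convention explicitly at the outset, verify it against the construction of $X \in \{0,1\}^{n \times d}$, and only then apply transitivity. Once this orientation is pinned down to match the stated conclusion, the remaining argument is a one-line composition of the two inclusions.
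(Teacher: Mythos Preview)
Your proposal is correct and matches the paper's approach exactly: the paper dispatches the lemma in one sentence by writing that if $S_i$ has $p_k$ (i.e., $p_k\sqsubset S_i$), then the chain $p_k \sqsubset p_j \sqsubset S_i$ holds, which is precisely the single transitivity step you describe. Your caution about the orientation of the occurrence relation is well placed, since the paper's one-liner is terse on exactly this point, but the intended argument is the same composition of inclusions you outline.
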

This is obvious because if an input instance $S_i$ has $p_k$ (i.e., $p_k\sqsubset S_i$), then $p_k \sqsubset p_j \sqsubset S_i$ also holds.
Using this lemma, we derive the following theorem.
\begin{theo}[Safe Pattern Pruning (SPP) Rule]
 \label{the:safe_pattern_pruning_rule}
 For any pair of feasible solution $(\bzero, \azero)$ and $j\in[d]$, let us define, what we call, \emph{safe pattern pruning score (SPP score)} as follows:
 \begin{align}
  \label{eq:spp_score}
  v_j(\bzero, \azero)\coloneqq \max\left\{ 
  \sum_{i:\tilde{\alpha}_i > 0}x_{ij}\tilde{\alpha}_i,
  -\sum_{i:\tilde{\alpha}_i < 0}x_{ij}\tilde{\alpha}_i\right\}
  + r(\bzero, \azero)\|\bm x_{:j}\|_2. 
 \end{align}
 Then, 
 \begin{align*}
  v_j(\bzero, \azero) < \lambda
  ~\Rightarrow~
  \beta^*_k = 0
  ~\forall k \in [d] \text{ s.t. } p_k \sqsubset p_j. 
 \end{align*}
\end{theo}

The proof is presented in Note S2.
Using Theorem~\ref{the:safe_pattern_pruning_rule}, it is possible to screen a group of patterns at once during the process of searching in the tree that represents the relationships between patterns.
Specifically, when searching for the screen-able patterns in the tree from the root node to descent nodes, if the SPP score of a pattern $p_j$ corresponding to a certain node of the tree satisfies $v_j(\tilde{\bm \beta}, \tilde{\bm \alpha}) < \lambda$, all patterns $p_k$ corresponding to its descendant nodes satisfies $p_k \sqsubset p_j$, so they can be screened out as unnecessary patterns.
Algorithm \ref{alg:safe_pattern_pruning} in Note S4 shows the pseudo-code of the SPP.

To apply the SPP to actual predictive pattern mining, a pair of feasible solutions $(\tilde{\bm \beta}, \tilde{\bm \alpha})$ for the primal and dual problems is necessary.
Although the SPP rules hold for any feasible solutions $(\tilde{\bm \beta}, \tilde{\bm \alpha})$, the tightness of the bound depends on the choice of feasible solutions.
Specifically, because the tightness of the SPP bound is determined by the duality gap $P(\tilde{\bm \beta}) - D(\tilde{\bm \alpha})$ of the feasible solution, the ``closer'' the pair of feasible solutions $(\tilde{\bm \beta}, \tilde{\bm \alpha})$ is to the (unknown) pair of optimal solutions $(\bm \beta^*, \bm \alpha^*)$, the tighter the SPP bound will be.
Therefore, when applying the SPP to actual predictive pattern mining, it is important to obtain feasible solutions that are sufficiently ``close'' to the optimal solutions.

\subsection*{SPP for Model Selection}
\label{sec:sec4}

As mentioned in the previous section, in order to screen out features using the SPP, a feasible pair of solutions $(\tilde{\bm \beta}, \tilde{\bm \alpha})$ for the primal and dual problems, respectively, that are sufficiently close to the optimal solutions $(\bm \beta^*, \bm \alpha^*)$ is necessary.
In practical data analysis, it is often necessary to learn multiple models rather than just obtaining a single predictive pattern mining model, e.g. in selecting hyperparameters $\lambda, \kappa$, or evaluating the generalization performance through cross-validation (CV).
In this section, we discuss how to apply the SPP for a series of model fittings in model selection.
Our main idea is to use the optimal solutions of models fitted in slightly different problem settings (e.g., with similar hyperparameter values or with only a part of the data being different in CV) as reference feasible solutions for the SPP.
Furthermore, we propose an extension of the SPP that enables more effective utilization of multiple reference feasible solutions in practical model fitting scenarios where multiple reference feasible solutions are naturally available.

First, in \refsec{sec:ext_multi_screening}, we describe an extension of safe screening using two different reference feasible solutions.
Next, in \refsec{sec:ext_dynamic}, we introduce an approach called dynamic screening in which the solutions obtained during learning process are used as reference feasible solutions for the SPP.
Furthermore, in \refsec{sec:ext_multi_hyperparameter}, we consider how to apply the SPP in model selection process in which two hyperparameters $\lambda$ and $\kappa$ are optimized.
Finally, in \refsec{sec:ext_cross_validation}, we discuss how to apply the SPP when selecting hyperparameters using cross-validation.
%

\subsubsection*{SPP with Multiple Pairs of Feasible Solutions}
\label{sec:ext_multi_screening}

Let us consider the case where two feasible solutions 
$R_1 \coloneqq (\bone, \tilde{\beta}_0^{(1)}, \aone)$,
$R_2 \coloneqq (\btwo, \tilde{\beta}_0^{(2)}, \atwo)$
are available.
From Lemma 3,
the optimal solution
$\bm\alpha^*$
must be contained in both of the two hyperspheres
$B_1 \coloneqq B(R_1)$
and
$B_2 \coloneqq B(R_2)$.
Therefore,
it is possible to further narrow down the range of
$\bm\alpha^*$
to
$B_1\cap B_2$,
and consider a tighter upper bound
\begin{align*}
\max_{\bm\alpha\in B_1 \cap B_2\cap H} |\Xj^\top\bm\alpha| \le \min\left\{\max_{\bm\alpha\in B_1\cap H} |\Xj^\top\bm\alpha|, \max_{\bm\alpha\in B_2\cap H}|\Xj^\top\bm\alpha|\right\}.
\end{align*}
By using this tighter upper bound in the safe screening, we expect that more inactive patterns can be screened out.
For a pattern $j$, if
$\max_{\bm\alpha\in B_1 \cap B_2} |\Xj^\top\bm\alpha| <\lambda\le \min\left\{\max_{\bm\alpha\in B_1} |\Xj^\top\bm\alpha|, \max_{\bm\alpha\in B_2}|\Xj^\top\bm\alpha|\right\}$, 
then it is not possible to remove the pattern $j$ using either $B_1$ or $B_2$ alone, but it becomes possible to remove it by using the intersection of $B_1$ and $B_2$.
The following theorem indicates that
$\max_{\bm\alpha\in B_1 \cap B_2} |\Xj^\top\bm\alpha|$ 
can be expressed in a closed form, and can be computed in $\mathcal O(n)$ time.
\begin{theo}[Multiple safe screening rule]
 \label{the:multiple_safe_screening_rule}
 For any pair of primal-dual feasible solutions
 $R_1=(\tilde{\bm\beta}^{(1)}, \tilde{\beta}_0^{(1)}, \tilde{\bm\alpha}^{(1)}), R_2=(\tilde{\bm\beta}^{(2)}, \tilde{\beta}_0^{(2)}, \tilde{\bm\alpha}^{(2)})$, 
 and for any $j \in [d]$,
 it holds that 
 \[
 u_j^\prime(R_1, R_2)\coloneqq\max_{\bm\alpha\in B_1\cap B_2\cap H} |\Xj^\top\bm\alpha| = \max\{u_j^+, u_j^-\} < \lambda\Longrightarrow\beta_j^*=0,
 \]
 where 
 \[
 u_j^+\coloneqq\left\{\begin{array}{ll}
		\Xj^\top\aone + r(R_1)\|\Xj-\Pi_{\bm 1}(\Xj)\|_2, & \Xj\in C_1,\\
		       \Xj^\top\atwo + r(R_2)\|\Xj-\Pi_{\bm 1}(\Xj)\|_2, & \Xj\in C_2,\\
		       \Xj^\top\azero^\prime + r^\prime \|\Xj-\Pi_{\bm 1}(\Xj)-\Pi_{\bm \delta}(\Xj)\|_2, & \mathrm{otherwise},
 \end{array}\right.,
 \]
 \[
 u_j^-\coloneqq\left\{\begin{array}{ll}
		-\Xj^\top\aone + r(R_1)\|\Xj-\Pi_{\bm 1}(\Xj)\|_2, & -\Xj\in C_1,\\
		       -\Xj^\top\atwo + r(R_2)\|\Xj-\Pi_{\bm 1}(\Xj)\|_2, & -\Xj\in C_2,\\
		       -\Xj^\top\azero^\prime + r^\prime \|\Xj-\Pi_{\bm 1}(\Xj)-\Pi_{\bm \delta}(\Xj)\|_2, & \mathrm{otherwise},
 \end{array}\right.,
 \]
 and 
 \begin{align*}
  \bm \delta &\coloneqq \aone - \atwo,\\
  \azero^\prime &\coloneqq t\aone + (1-t)\atwo,\\
  r^\prime &\coloneqq \sqrt{r(R_2)^2 - t^2 \|\bm \delta\|_2^2},\\
  t &\coloneqq \frac{1}{2}\left(1 + \frac{r(R_2)^2 - r(R_1)^2}{\|\bm \delta\|_2^2}\right),\\
  C_1&\coloneqq \left\{\bm a\in\mathbb R^n\mid \frac{\bm a^\top\bm \delta}{\|\bm a-\Pi_{\bm 1}(\bm a)\|_2}\le\frac{r(R_2)^2 - r(R_1)^2 - \|\bm \delta\|_2^2}{2r(R_1)} \right\},\\
  C_2&\coloneqq \left\{\bm a\in\mathbb R^n\mid \frac{\bm a^\top\bm \delta}{\|\bm a-\Pi_{\bm 1}(\bm a)\|_2}\ge\frac{r(R_2)^2 - r(R_1)^2 + \|\bm \delta\|_2^2}{2r(R_2)} \right\}.
 \end{align*}
\end{theo}

The proof of this theorem is presented in Note S3.

\begin{proof}[Remark]
Instead of the safe screening rule in Theorem \ref{the:multiple_safe_screening_rule}, we may consider the safe pruning rule for two reference solutions. However, unlike the safe screening case above, we conjecture that the pruning conditions cannot be written in a closed form. So, for the safe pruning with two reference solutions, we just apply two pruning rules derived by each of the solutions:
\[
    v^\prime_j(R_1, R_2)\coloneqq\min\left\{ v_j(R_1), v_j(R_2)\right\} < \lambda\\\Longrightarrow \forall p_j\sqsubset p_k, \beta_k^* = 0.
\]

In addition, if we have three or more reference solutions, we expect that we can screen out more features.
However, we conjecture that the screening conditions become very complicated as the increase of the number of reference solutions.
\end{proof}

\subsubsection*{Multiple dynamic screening with SPP}
\label{sec:ext_dynamic}

In this section, we describe the extension of multiple safe screening to \emph{dynamic screening}~\citep{bonnefoy2015dynamic}.
Dynamic screening is a method of performing safe screening using feasible solutions obtained during optimization.
Because the performance of safe screening depends on how close the feasible solution is to the optimal solution, more patterns tend to be removed with updated solutions.
This means that, if the update to the solution is not substantial enough, the performance of safe screening may not differ significantly between before and after the updates.
In the case where multiple feasible solutions are available, all of the solutions must be sufficiently updated.
However, updating multiple solutions needs additional computational costs, so it is necessary to consider the trade-off between the cost of updating multiple solutions and the number of patterns that can be removed by safe screening.

In order to reduce the cost of updating multiple solutions, we restrict the number of multiple solution updates to $M \in \mathbb{N}$.
Specifically, we repeat updates and screening for multiple solutions up to $M$ iterations, and then keep the single solution that has the smallest duality gap, which is an indicator of proximity to the optimal solution, while the others are discarded.

\subsubsection*{SPP with multiple hyperparameter selection}
\label{sec:ext_multi_hyperparameter}

In this section, we describe a method for accelerating the computation of regularization paths for multiple hyperparameters using screening and pruning with multiple reference feasible solutions.
Specifically, we consider the regularization paths for the two hyperparameters of the Elastic Net.
When there are two hyperparameters, we can consider a two-dimensional regularization path as shown in Fig.~\ref{fig:elastic_net_regularization_path}, where the sequence of regularization parameters for the $L_1$ norm is represented by $\lambda^{(1)}, \lambda^{(2)}, \ldots$, and the sequence of regularization parameters that adjust the relative strength of the $L_2$ norm is represented by $\kappa^{(1)}, \kappa^{(2)}, \ldots$.
For a given set of hyperparameters $(\lambda^{(t)}, \kappa^{(t^\prime)})$, there are two feasible solutions that can be used as reference solutions for optimization, i.e., the optimal solution at $(\lambda^{(t-1)}, \kappa^{(t^\prime)})$ and the optimal solution at $(\lambda^{(t)}, \kappa^{(t^\prime - 1)})$.

For the hyperparamers $\lambda$ and $\kappa$, we considered the following sequences of candidates: ${\lambda^{(k)}}, {k\in[K]}$ such that $\lambda^{(1)} > \lambda^{(2)} >\cdots > \lambda^{(K)}$, and ${\kappa^{(k^\prime)}}, {k^\prime\in K^\prime}$ such that $\kappa^{(1)} < \kappa^{(2)}<\cdots < \kappa^{(K^\prime)}$, respectively. 
In many cases, as $\kappa$ increases for a fixed $\lambda^{(k)}$, the number of patterns for which $\beta^*_j \neq 0$ decreases.
Therefore, by setting ${\kappa^{(k^\prime)}}$ in this way, the number of patterns for which $\beta^*_j\neq 0$ increases as $k^\prime$ becomes larger.
Furthermore, we set $\lambda^{(1)}$ as the smallest $\lambda$ value such that $\bm\beta^* = \bm 0$ and set $\kappa^{(1)}=0$.

In the case of having two regularization parameters, iterative optimization is performed in a manner analogous to the case of a single parameter.
There are multiple possible options to optimize the regularization parameter sequence.
In this paper, we adopt an option to optimize them in the order of $(\lambda^{(1)}, \kappa^{(1)})$, $(\lambda^{(1)}, \kappa^{(2)})$, $\ldots$, $(\lambda^{(1)}, \kappa^{(K^\prime)})$, $(\lambda^{(2)}, \kappa^{(1)})$, $\ldots$, $(\lambda^{(2)}, \kappa^{(K^\prime)})$, $\ldots$, $(\lambda^{(K)}, \kappa^{(K^\prime)})$.

Let $\mathcal R$ denote the set of feasible solutions.
During optimization at $(\lambda^{(k)}, \kappa^{(k^\prime)})$, if $k>1$, the optimal solution at $(\lambda^{(k-1)}, \kappa^{(k^\prime)})$ are appended to $\mathcal R$.
On the other hand, if $k^\prime>1$, the optimal solution at $(\lambda^{(k)}, \kappa^{(k^\prime - 1)})$ are appended to $\mathcal R$.
When $|\mathcal R| = 1$, we execute safe pruning and screening using a single feasible solution in conventional way.
In contrast, when $|\mathcal R| = 2$, we execute pruning and screening using two feasible solutions.
The detailed algorithm is described in Algorithm~\ref{alg:pathwise_multi_spp} in Note S4.
Note that this approach can be easily extended to cases with three or more hyperparameters although it is not explicitly described in this paper.

\subsubsection*{SPP with Hyper-parameter selection by cross-validation}
\label{sec:ext_cross_validation}

CV is commonly used for determining hyperparameters. 
In CV, the following steps are taken to determine hyperparameters:
First, the given data set is divided into several groups.
Then, one of the groups is used as the validation set, while the remaining groups are used for model training.
Performance metrics such as prediction errors and classification accuracy are calculated for each hyperparameter(s) candidate using the validation data.
This process is repeated by sequentially swapping the validation group, and the metrics are averaged for each hyperparameter(s) candidate.
The hyperparameter candidate(s) with the highest average score is selected as the best hyperparameter(s).
In such a CV process, a sequence of optimization problems with slightly different training set are solved one by one for each hyperparameter.

Our idea here is to use optimal solutions obtained at different steps of CV as another reference feasible solutions.
Specifically, we use two reference feasible solutions obtained as the optimal solutions at different hyperparameters and at different CV steps, and perform safe screening and pruning using these two reference feasible solutions as described in \refsec{sec:ext_multi_screening}.
Let $\mathcal I^{(1)}=[n]$ denote the set of indices of the entire dataset, and consider a sequences of its subset, denoted by $\mathcal I^{(2)}, \mathcal I^{(3)}, \ldots, \mathcal I^{(K)} \subsetneq \cI^{(1)}$.
Fig.~\ref{fig:cv_regularization_path} shows a schematic diagram of this procedure, and the details are described in Algorithm~\ref{alg:pathwise_cv_spp} in Note S4.
Given a sequence of subscript sets $\{\mathcal I^{(k)}\}_{k\in[K]}$ and a sequence of hyperparameters $\{\lambda^{k^\prime}\}_{k^\prime\in[K^\prime]}$, we use the reference feasible solutions of the optimal solutions at $(\mathcal I^{(1)}, \lambda^{(k^\prime)})$ and $(\mathcal I^{(k)}, \lambda^{(k^\prime - 1)})$ during optimization of $(\mathcal I^{(k)}, \lambda^{(k^\prime)})$.
Note that, in Algorithm~\ref{alg:pathwise_multi_spp}, $\kappa$ is fixed for simplicity, but it is possible to extend it to select both $\lambda$ and $\kappa$.

\subsection*{Numerical Experiments}
\label{sec:experiments}
In this section, we describe numerical experiments that verify the effectiveness of the proposed SPP method and its extension in model selection scenario.

\subsubsection*{SPP with multiple Hyper-parameter selection}
\label{sec:exp_multi_hyperparameter}
We first conducted a comparison of computation times for two-dimensional regularization path calculations for the two regularization parameters, $\lambda$ and $\kappa$, in the Elastic Net.
We compared the performances of Single-SPP, which utilizes only a single reference feasible solution, and Multi-SPP, which uses multiple reference feasible solutions.
In the case of Multi-SPP, experiments were performed for $M \in \{0, 1, 2, 4\}$, where $M$ represents the number of times that multiple dynamic screening is executed (note that $M=0$ signifies the use of multiple solutions solely for screening and pruning at the start of optimization, with dynamic screening performed using only a single solution thereafter).
Regarding $\lambda$, we investigated cases where the number of partitions from $\lambda_{\mathrm{max}}$ to $0.01\lambda_\mathrm{max}$ was 5, 10, 20, and 40,
where $\lambda_{\mathrm{max}}$ is the smallest $\lambda$ that makes all patterns inactive (See \refsec{sec:exp-setup}).
As for $\kappa$ sequence, we investigated the cases with $\kappa \in \{0, 0.01, 0.1, 1.0, 10.0, 100.0\}$.
For both Single-SPP and Multi-SPP, the optimization was performed based on Algorithm~\ref{alg:pathwise_multi_spp} in Note S4.
In Single-SPP, only the optimal solution from the previous $\lambda$ was used as a reference feasible solution.

The experimental results are presented in Fig.~\ref{fig:result_multihp}.
From this figure, it can be confirmed that the use of multiple reference solutions is effective in many cases.
Moreover, in Multi-SPP, dynamic screening can improve performance to some extent even for small values of $M$.
The increase in $M$ did not result in a considerable increase in computational time.
Although no improvement in computation speed was observed for w1a, this may be due to the additional cost of computing multiple solutions outweighs the benefits of reducing the computation cost with multiple solutions.

\subsubsection*{SPP with Hyper-parameter selection by cross-validation}
\label{sec:exp_cv}
Next, we conducted experiments to investigate the use of reference feasible solutions in the hyperparameter selection process based on CV.
To compare the computational costs, we investigated the case where one of the two hyperparameters, $\kappa$, was fixed at $0$, and only $\lambda$ was varied as in \refsec{sec:ext_multi_hyperparameter}.
In terms of CV configuration, we compared the computation time of Leave-One-Out CV (LOOCV).
Specifically, we constructed 10 leave-one-out datasets at random, and compared the relative computational costs of each method option and problem setting.
For a Single-SPP with one reference feasible solution, we used the optimal solution from the previous $\lambda$ as in \refsec{sec:exp_multi_hyperparameter}.

Fig.~\ref{fig:result_cv} shows the experimental results.
In Multi-SPP, a significant reduction in overall computation time can be achieved by using the optimal solution with the entire dataset as a reference feasible solution.
While there were some cases where dynamic screening in Multi-SPP showed some effectiveness, no significant changes in performance were observed in many other cases. 
We conjecture that this is due to a trade-off between the reduction in computation time resulting from the effectiveness of screening with increasing $M$ and the increase in computation time necessary for updating multiple optimal solutions.

\subsubsection*{Comparison with boosting-based methods}
\label{sec:exp_gboost}
Finally, we compared the computational costs of the proposed SPP and the existing boosting-based approach in predictive pattern mining~\citep{saigo2007mining,saigo2009linear,saigo2009gboost}. 
The boosting-based approach involves sequential addition of patterns to the prediction model, necessitating tree traverse search at each step.
In contrast, the SPP requires only a single tree traverse search (under fixed regularization parameter), thereby exhibiting a computational advantage.

In order to conduct fair comparisons, we set the problem such that both methods start from $\lambda_{\rm max}$ and seek the optimal solution at $0.01\lambda_{\rm max}$.
For the boosting-based method, we measured the computational cost taken from the optimal solution at $\lambda_{\rm max}$, which does not include any patterns, to adding one pattern at a time until arriving at the optimal solution at $0.01\lambda_{\rm max}$.
For the SPP, we considered the one-dimensional regularization path from $\lambda_{\rm max}$ to $0.01\lambda_{\rm max}$ and measured the computational cost when performing the same process as in \refsec{sec:exp_multi_hyperparameter}.
Both methods used the coordinate gradient descent method~\citep{tseng2009coordinate} for optimization, and the another hyperparameter $\kappa$ was set to $0$.

First, we considered the problems of graph classification and graph regression with chemical compound datasets as examples in predictive graph mining. 
Specifically, we used two datasets for graph classification:
CPDB (``Helma CPDB Mutagenicity Subset'', $n=684$) and
Mutagenicity (``Bursi Mutagenicity Dataset'', $n=4337$),
and two datasets for graph regression:
Bergstrom (``Bergstrom Melting Point Dataset'', $n=185$) and
Karthikeyan (``Karthikeyan Melting Point Dataset'', $n=4450$).
All datasets were retrieved from \url{http://cheminformatics.org/datasets/}.
Note that, since these datasets were downloaded
when our preliminary work \citep{nakagawa2016safe} was conducted,
and the website above were closed later,
we also present the link to the archived website:
\url{http://web.archive.org/web/20150503130239/http://cheminformatics.org/datasets/}.
In addition, since the number of instances $n$ were mistakenly noted in the preliminary work,
we noted collect numbers.
%
%
Fig.~\ref{fig:time_graph_mining} shows the computational cost of the boosting-based method (\textsf{boosting}) and the SPP (\textsf{SPP}).
In all the cases, \textsf{SPP} is faster than \textsf{boosting}, and the differences become more significant as the maximum length of patterns increases.
The results also indicates that the tree traverse time are not so different between the two methods. 
We conjecture that this is because the most time-consuming part of gSpan is the isomorphism check which is required to avoid enumerating duplicated graphs.

Next, we considered classification and regression in item-set predictive mining.
We used two datasets for classification: \textsf{splice}($n=1000$) and \textsf{a9a}($n=32561$), and two datasets for regression: \textsf{dna}($n=2000$) and \textsf{protein}($n=6621$).
The results are shown in Fig.~\ref{fig:time_itemset_mining}.
In all the cases, \textsf{SPP} was faster than \textsf{boosting}.
Unlike graph mining cases, the tree traverse time of the SPP was smaller than that of boosting-based method.
We conjecture that this is because boosting-based methods require multiple enumeration of patterns, while the SPP requires only one enumeration.

\section*{Discussion}

%

Structured data such as sets, graphs, and sequences are common in many fields, and it is necessary to develop machine learning methods to handle such structure data. Although neural networks for structured data have seen significant recent development and can achieve good predictive performances, practical problems often require interpretation and explanation of the model behavior and extracting important features. In this study, we propose a pattern mining approach for machine learning modeling of structured data that can both prediction ability and explainability.

The challenge of extracting features from structural data lies in the exponential increase in the number of potential substructures that can serve as features. This difficulty has led to the development of various pattern mining algorithms, particularly in the task of enumeration. However, research that integrates pattern mining with predictive modelings, such as regression or classification, is limited. To our knowledge, only boosting-based approaches have been proposed, which are inefficient due to the requirement of a tree traverse for each additional feature.

In this study, we addressed a common problem in the field of pattern mining by introducing safe screening. Safe screening is a technique developed in the field of sparse modeling that allows for the identification of redundant features before solving the optimization problem, which can significantly reduce the computational cost. However, as noted in \refsecs{sec:introduction}{sec:proposed_method}, applying conventional safe screening to many patterns is not feasible. To address this, we proposed the SPP method, which efficiently handles multiple features in a single tree traverse, as opposed to a boosting-based approach. By leveraging the monotonicity of patterns, as described in Lemma \ref{lem:monotnicity}, the proposed SPP method can safely eliminate a subset of patterns with only a moderate increase in computational cost.

Furthermore, in this study, we demonstrated the effectiveness of the proposed SPP method in the entire model building process, including hyperparameter selection and cross-validation. Especially, it has been observed that the SPP method works more efficiently if two reference feasible solutions are available, as experimentally demonstrated in \refsecs{sec:exp_multi_hyperparameter}{sec:exp_cv}. Additionally, in \refsec{sec:ext_dynamic}, we discussed that the parameter $M$, which determines the number of dynamic safe screening applications with multiple solutions, needs to be appropriately chosen. The experiments in \refsecs{sec:exp_multi_hyperparameter}{sec:exp_cv} have shown that not only small but also large values of $M$ may increase computational costs. Therefore, determining the appropriate value of $M$ is a topic for future work.

\section*{Experimental Procedures}

\subsection*{Resource availability}

The code for reproducing all the results is available at:
\begin{itemize}
\item \url{https://github.com/takeuchi-lab/pmopt} (for \refsecs{sec:exp_multi_hyperparameter}{sec:exp_cv})
\item \url{https://github.com/takeuchi-lab/SafePatternPruning} (for \refsec{sec:exp_gboost}; same as our preliminary work \citep{nakagawa2016safe})
\end{itemize}

\subsection*{Experimental setup} \label{sec:exp-setup}
First, we describe the settings common to all the experiments.
We compared the computation time of the entire or partial regularization path with respect to the hyperparameters $\lambda$ and/or $\kappa$.
Regarding $\lambda$, we defined $\lambda_\mathrm{max}$ as the largest value of $\lambda$ for which $\bm\beta^*=\bm 0$, and constructed a sequence of $\lambda$s by partitioning the interval from $\lambda_\mathrm{max}$ to $0.01\lambda_\mathrm{max}$ into equally spaced values on a logarithmic scale, where the number of partitions is varied depending on the experimental options.
The datasets used in the experiments in \refsecs{sec:exp_multi_hyperparameter}{sec:exp_cv} are presented in Table~\ref{tab:dataset}.
The coordinate descent method was employed for optimization, with a convergence criterion $\epsilon = 10^{-4}$.
During optimization, dynamic screening was performed every other iteration for the first $T=5$ cycles, and subsequently executed once every ten iterations.
PrefixSpan was employed as the mining algorithm for both set and sequence mining tasks.

\clearpage
\section*{Supplemental information}

\begin{description}
\item Note S1. Proof of Lemma \ref{lem:range_dual_optimal}
\item Note S2. Proof of Lemma \ref{lem:safe_screening_rule}
\item Note S3. Proof of Theorem \ref{the:multiple_safe_screening_rule}
\item Note S4. Algorithms
\end{description}

\section*{Acknowledgments}

This work was partially supported by MEXT KAKENHI (20H00601), JST CREST (JPMJCR21D3, JPMJCR22N2), JST Moonshot R\&D (JPMJMS2033-05), JST AIP Acceleration Research (JPMJCR21U2), NEDO (JPNP18002, JPNP20006) and RIKEN Center for Advanced Intelligence Project.

%
%
%

\bibliography{reference}

\clearpage
\section*{Figure captions}

\begin{itemize}
\item Figure \ref{fig:pattern_examples}. Examples of patterns (sub-structures) of (a) set data, (b) graph data, and (c) sequence data.
\item Figure \ref{fig:tree_pruning}. Conceptual diagram of pruning in the search for patterns of (a) sets, (b) graphs, (c) sequences represented by a tree.
\item Figure \ref{fig:elastic_net_regularization_path}. Schematic illustration of (a) Regularization path for the $L_1$-norm regularization parameter $\lambda$. and (b) Two-dimensional regularization path for the $L1$-norm regularization parameter $\lambda$ and the relative regularization parameter $\kappa$ for the $L2$-norm in the Elastic Net.
 The rectangles in each cell represent the $\beta^*_j$ at the corresponding regularization parameter.
 The color of each rectangle indicates the value of $\beta^*_j$ where red/blue shows its signs while the thickness of the color indicates the absolute value (llustrating the increase of active (non-zero) coefficients and their absolute values as $\lambda$ decreases).
 Note that, in (b), when $\lambda = \lambda^{(1)}$ and $\bm\beta^*=\bm 0$, it is not necessary to change $\kappa$.
\item Figure \ref{fig:cv_regularization_path}. A schematic illustration of how to use feasible solutions in hyperparameter selection based on CV.
 The left rectangle represents the training data, with white corresponding to the training data and red to the validation data.
 In using multiple solutions for CV setting, we not only use the optimal solution at the previous regularization parameter, but also use the optimal solution trained with the entire data.
\item Figure \ref{fig:result_multihp}. Computation time for the entire regularization path for each dataset.
 The horizontal axis shows how many partitions of $\lambda$ were made.
 It can be confirmed that the use of multiple solutions is effective in most cases.
 In addition, the Multi-dynamic screening also often leads to a reduction in computation time.
\item Figure \ref{fig:result_cv}. The computation time for leave-one-out cross-validation for each dataset.
 The use of multiple solutions is effective for all the cases. 
 The Multi-dynamic screening is effective in settings where the number of $\lambda$ is large.
\item Figure \ref{fig:time_graph_mining}. Computational time comparison for graph classification and regression.
    The horizontal axis represents the maximum length of patterns that are mined. Each bar contains computational time taken in the tree traverse (traverse) and the optimization procedure (solve) respectively.
\item Figure \ref{fig:time_itemset_mining}. Computational time comparison for item-set classification and regression.
    The horizontal axis represents the maximum length of patterns that are mined. Each bar contains computational time taken in the tree traverse (traverse) and the optimization procedure (solve) respectively.
\end{itemize}

\section*{Table caption}

\begin{itemize}
\item Table \ref{tab:dataset}. The list of dataset used in the experiments in \refsec{sec:experiments}.
\end{itemize}

\clearpage

\clearpage

\begin{figure}[p]
 \begin{center}
  \subfloat[][set]{\includegraphics[width=0.27\linewidth]{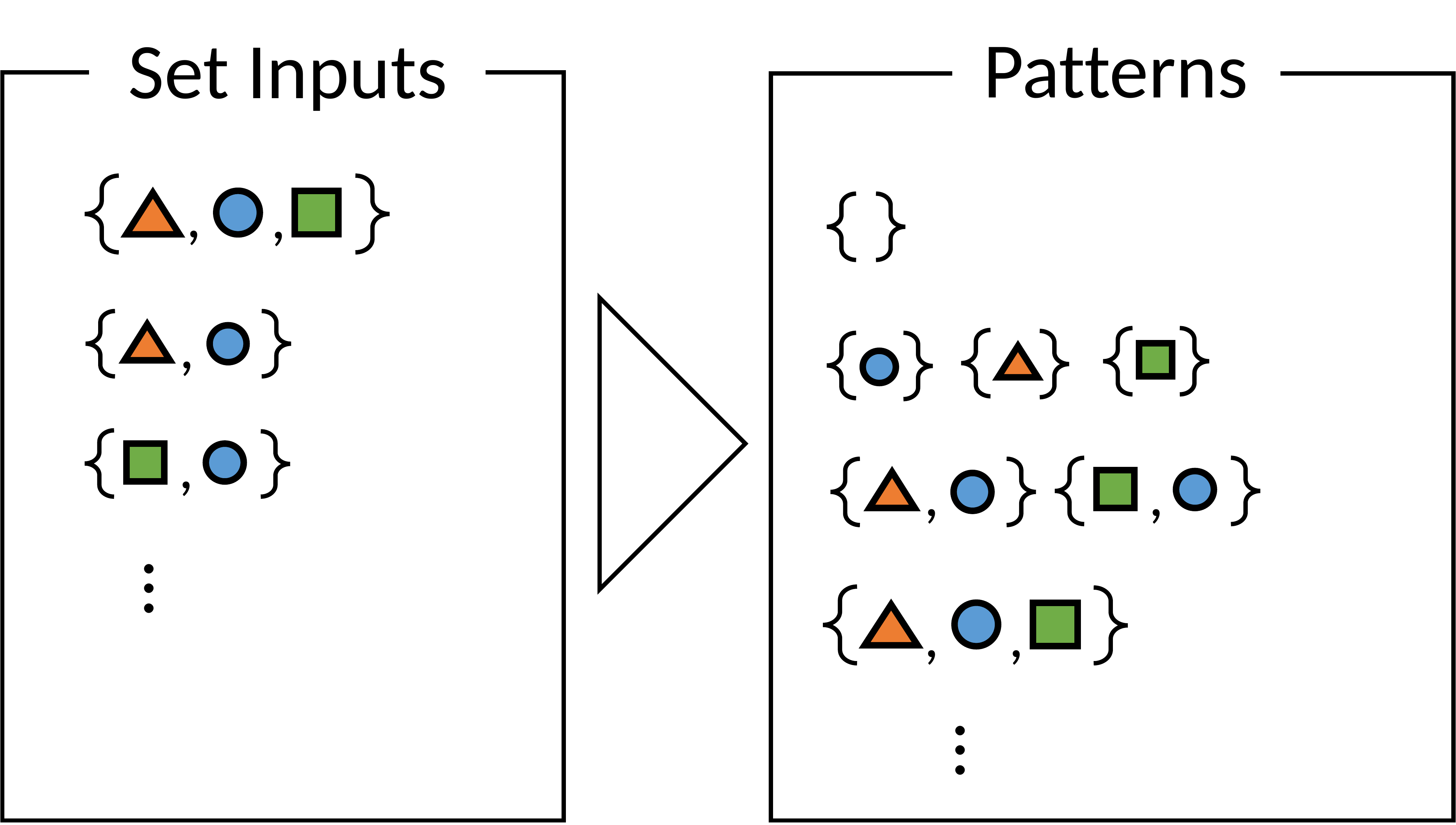}}
  \quad
  \subfloat[][graph]{\includegraphics[width=0.23\linewidth]{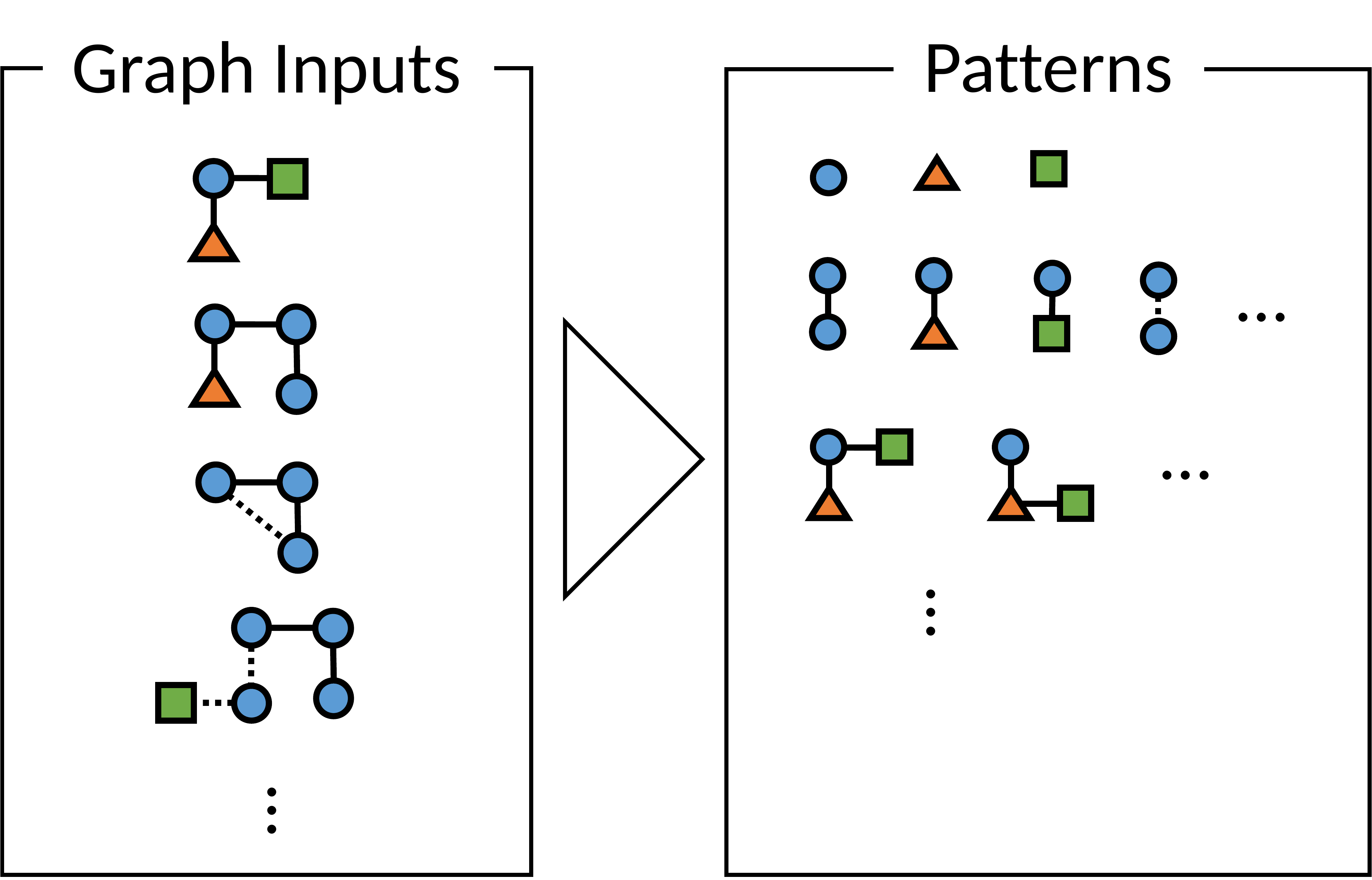}}
  \quad
  \subfloat[][sequence]{\includegraphics[width=0.41\linewidth]{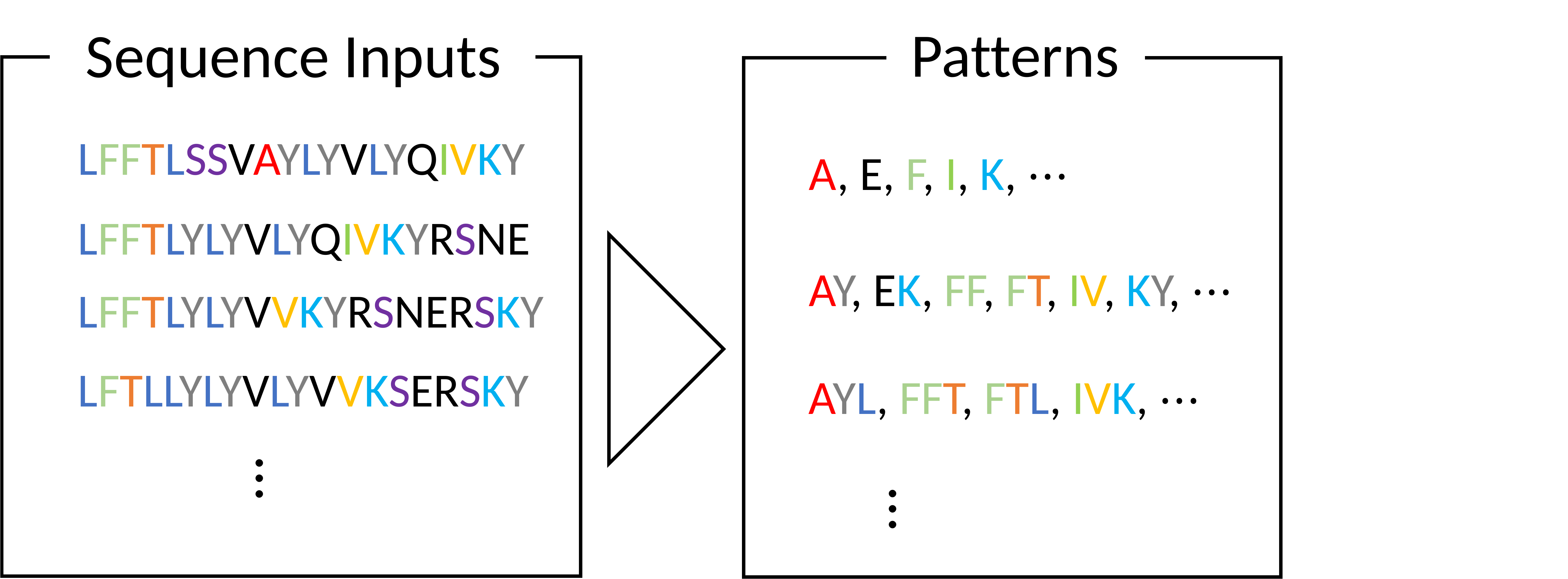}}
  \caption{Examples of patterns (sub-structures) of (a) set data, (b) graph data, and (c) sequence data.}
  \label{fig:pattern_examples}
 \end{center}
\end{figure}

\clearpage

\begin{figure}[p]
 \begin{center}
    \subfloat[][set]{\includegraphics[width=0.33\linewidth]{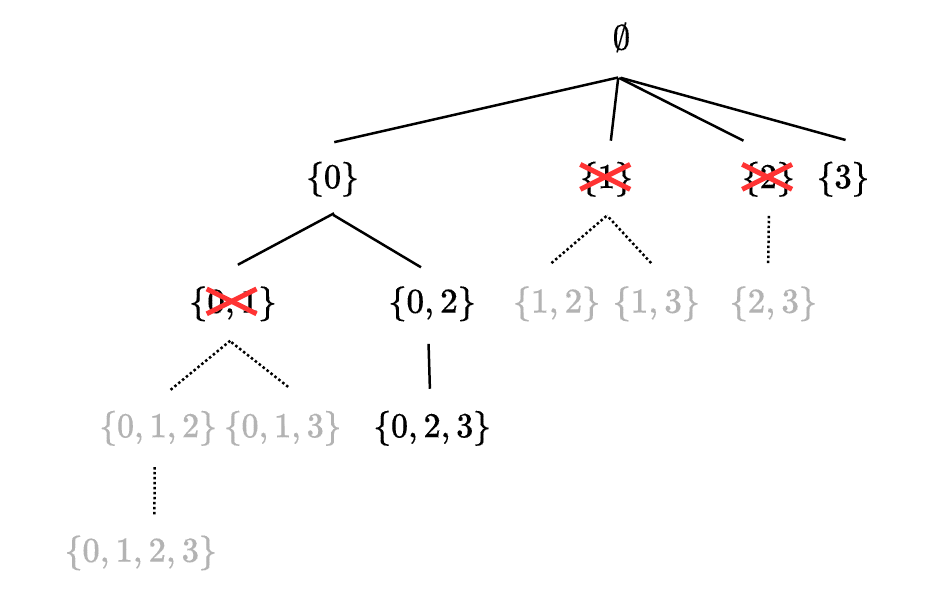}}
    \subfloat[][graph]{\includegraphics[width=0.33\linewidth]{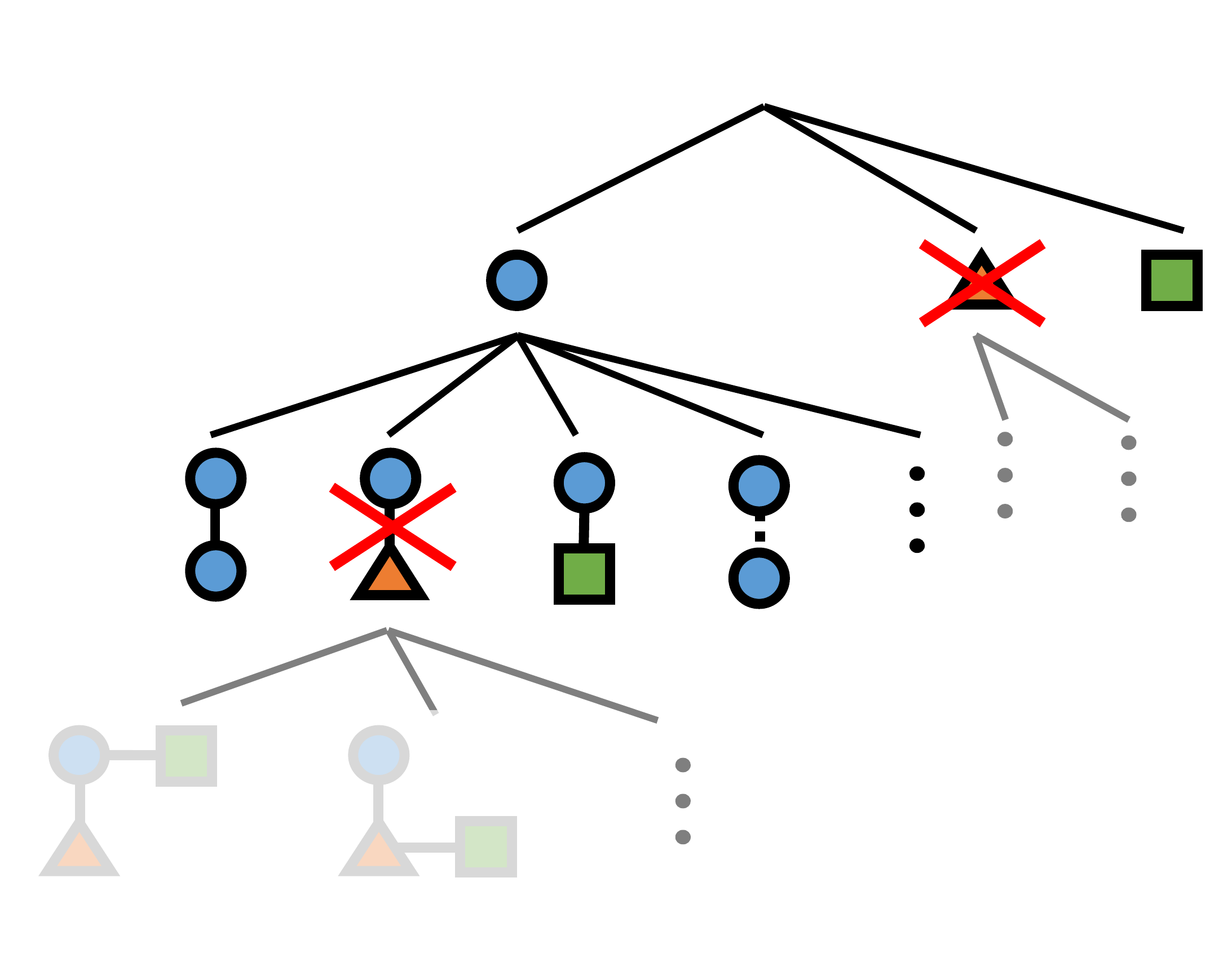}}
    \subfloat[][sequence]{\includegraphics[width=0.33\linewidth]{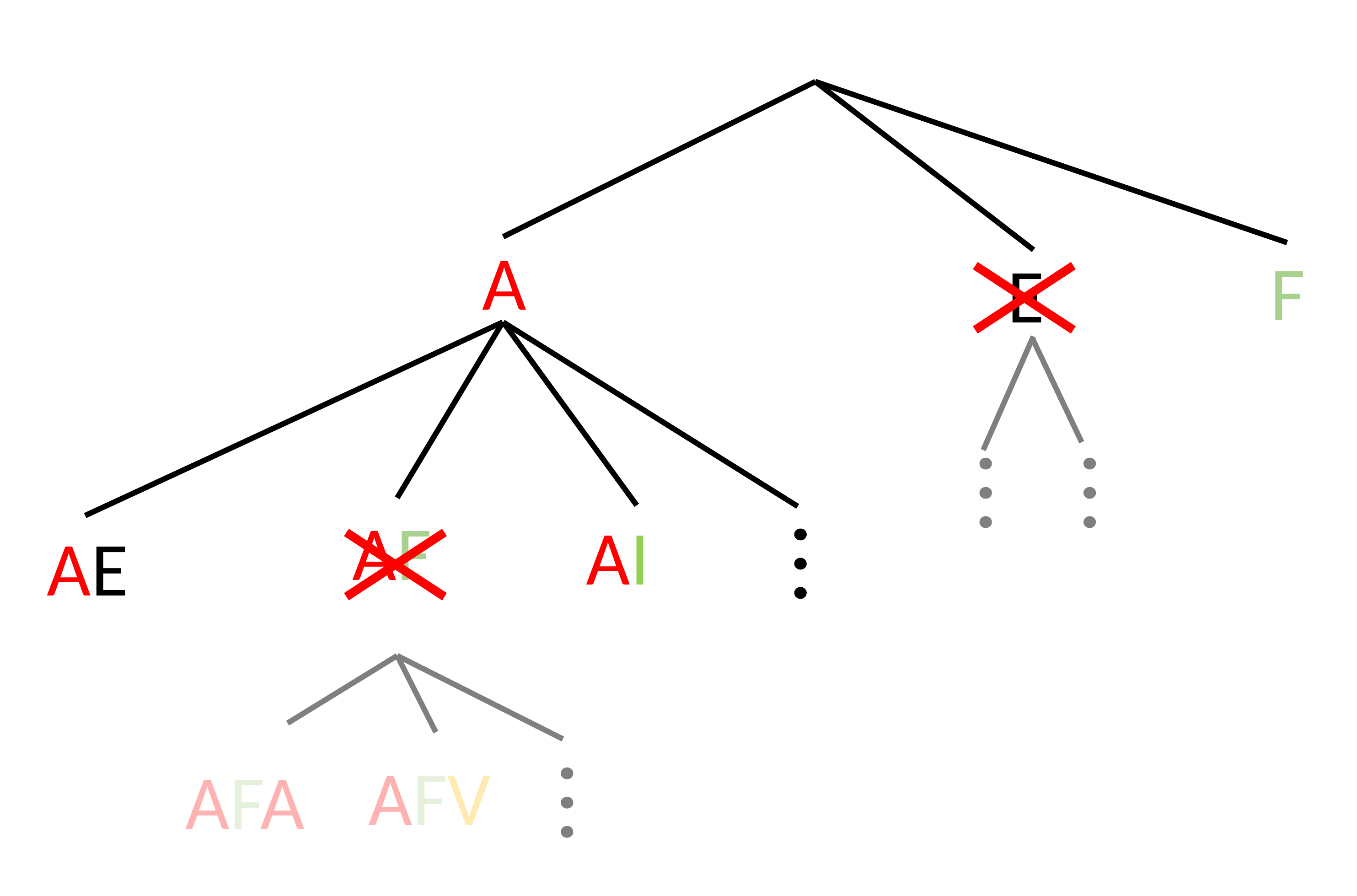}}
  \caption{Conceptual diagram of pruning in the search for patterns of (a) sets, (b) graphs, (c) sequences represented by a tree.}
  \label{fig:tree_pruning}
 \end{center}
\end{figure}

\clearpage

\begin{figure}[p]
 \centering
 \subfloat[][1-dimensional regularization path\label{fig:1d_regularization_path}]{\includegraphics[width=0.5\hsize]{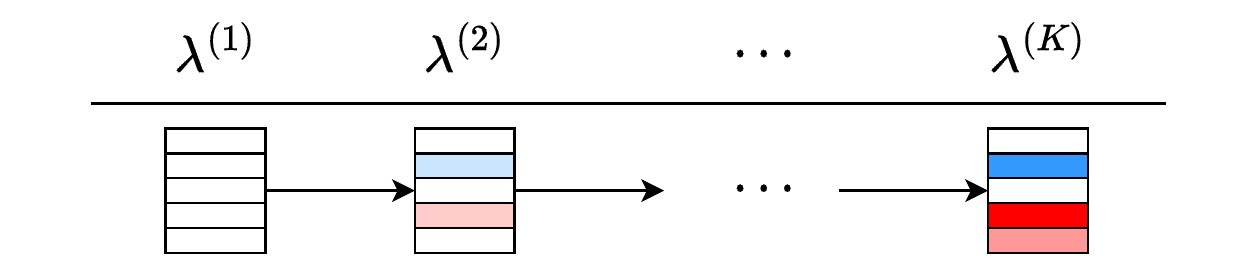}}
 \subfloat[][2-dimensional regularization path\label{fig:2d_regularization_path}]{\includegraphics[width=0.5\hsize]{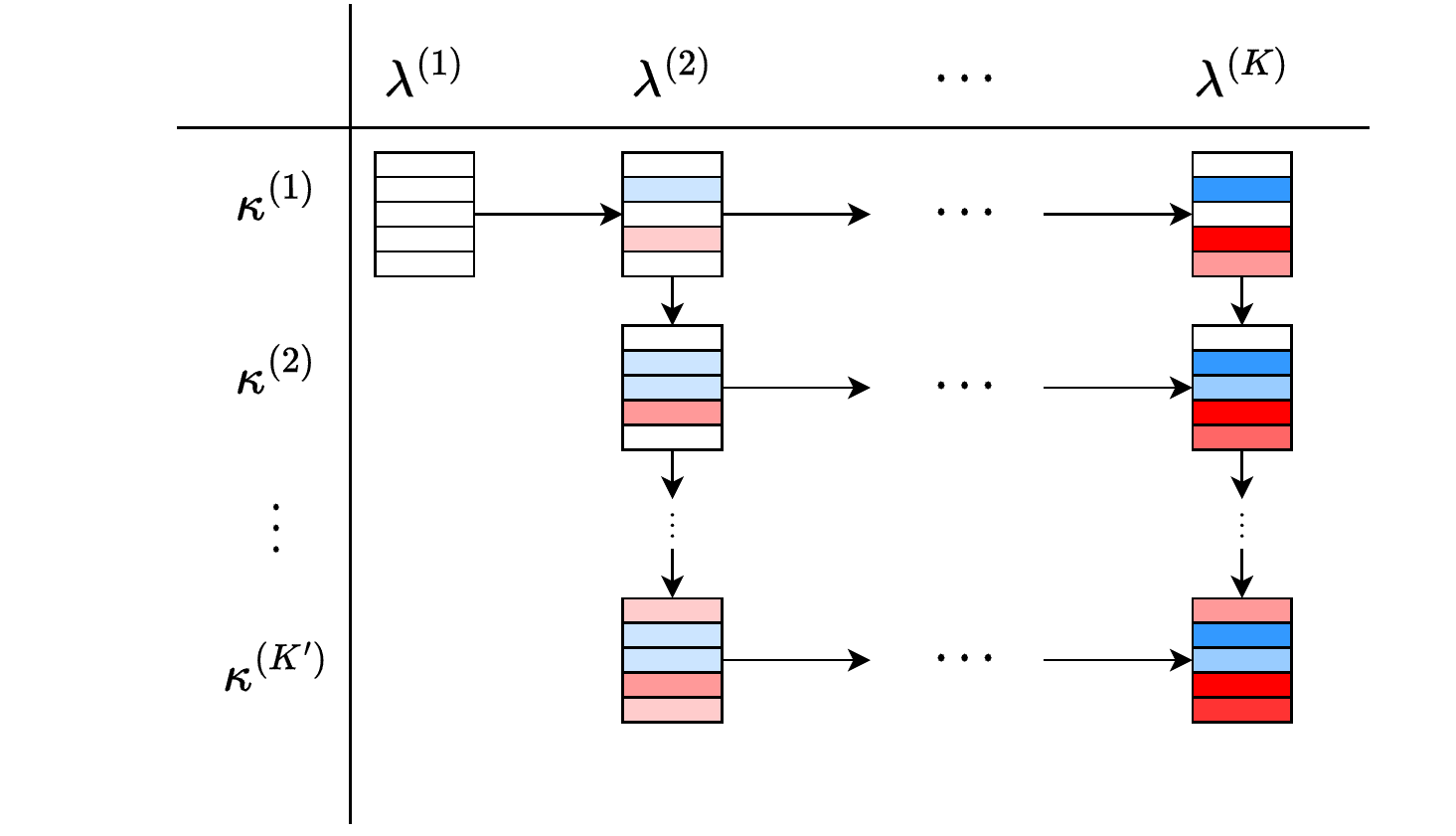}}
 \caption{Schematic illustration of (a) Regularization path for the $L_1$-norm regularization parameter $\lambda$. and (b) Two-dimensional regularization path for the $L1$-norm regularization parameter $\lambda$ and the relative regularization parameter $\kappa$ for the $L2$-norm in the Elastic Net.
 The rectangles in each cell represent the $\beta^*_j$ at the corresponding regularization parameter.
 The color of each rectangle indicates the value of $\beta^*_j$ where red/blue shows its signs while the thickness of the color indicates the absolute value (llustrating the increase of active (non-zero) coefficients and their absolute values as $\lambda$ decreases).
 Note that, in (b), when $\lambda = \lambda^{(1)}$ and $\bm\beta^*=\bm 0$, it is not necessary to change $\kappa$.
 }
 \label{fig:elastic_net_regularization_path}
\end{figure}

\clearpage

\begin{figure}[p]
 \centering
 \includegraphics[width=\hsize]{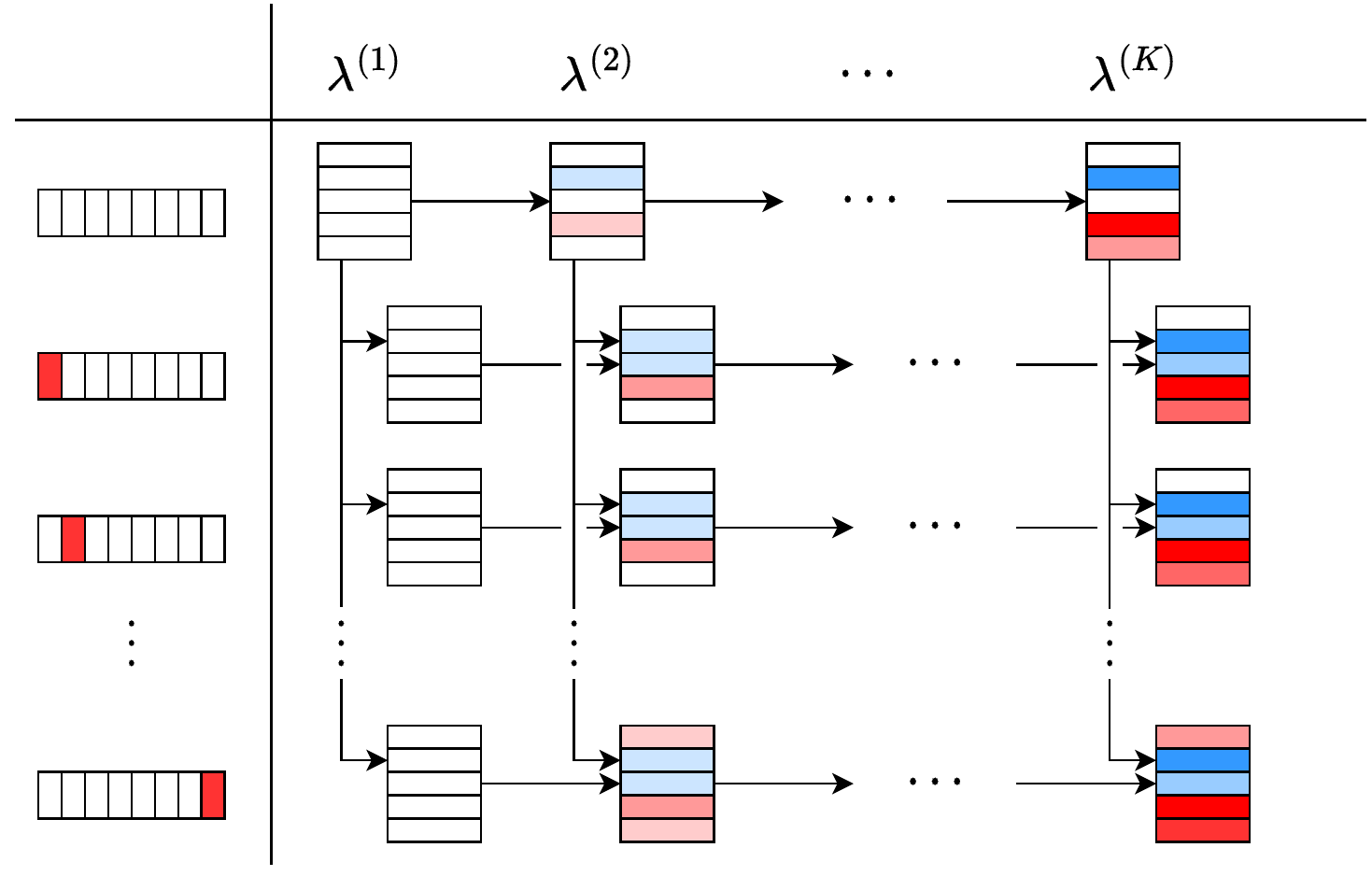}    
 \caption{A schematic illustration of how to use feasible solutions in hyperparameter selection based on CV.
 The left rectangle represents the training data, with white corresponding to the training data and red to the validation data.
 In using multiple solutions for CV setting, we not only use the optimal solution at the previous regularization parameter, but also use the optimal solution trained with the entire data.
 }
    \label{fig:cv_regularization_path}
\end{figure}

\clearpage





\begin{table}[p]
    \centering
    \caption{The list of dataset used in the experiments in \refsec{sec:experiments}.}
    \label{tab:dataset}
    \begin{tabular}{ccccc}\hline
        Dataset & $n$ & Structure-type & Maximum length of patterns & Problem \\\hline
        a1a & 1605 & Item-set & 5 & Classification \\
        a9a & 32561 & Item-set & 5 & Classification \\
        dna & 2000 & Item-set & 3 & Regression \\
        splice & 1000 & Item-set & 3 & Classification \\
        w1a & 2477 & Item-set & 3 & Classification \\
        rhodopsin & 1162 & Sequence & 50 & Regression \\\hline
    \end{tabular}
\end{table}

\clearpage

\begin{figure}[p]
 \centering
 \subfloat[][a1a]{\includegraphics[width=0.33\hsize]{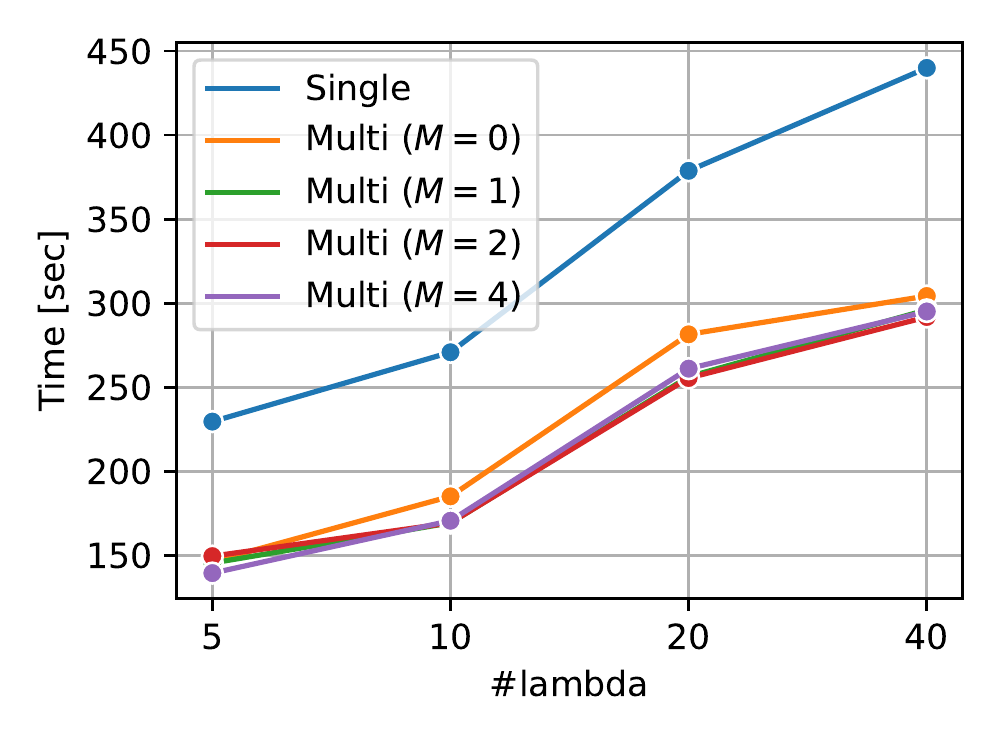}}
 \subfloat[][a9a]{\includegraphics[width=0.33\hsize]{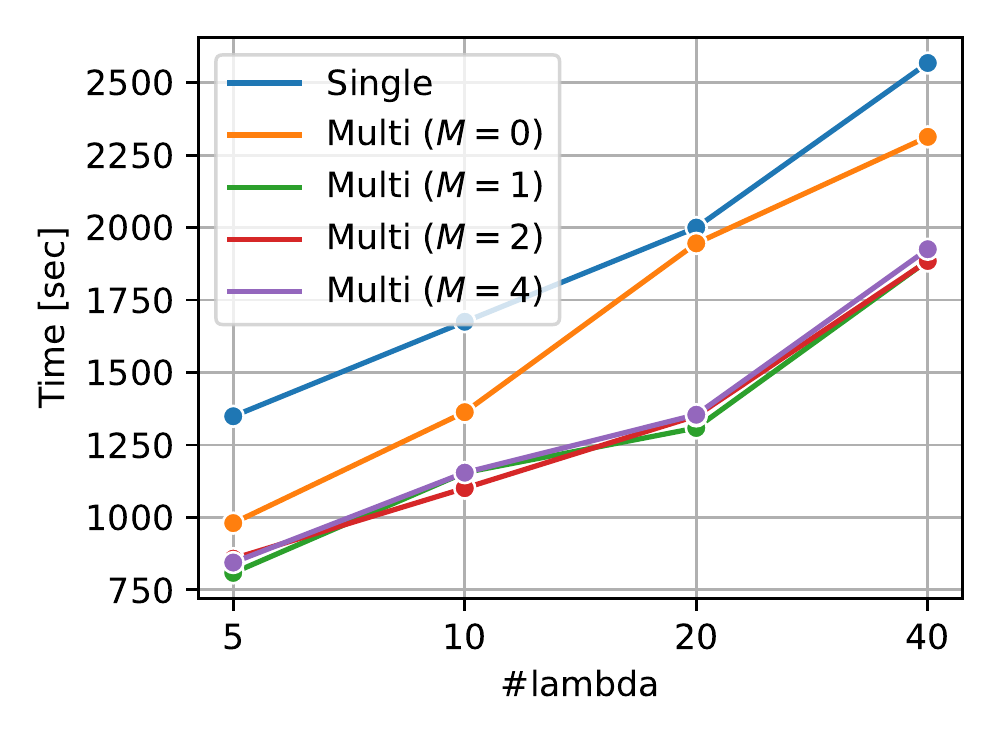}}
 \subfloat[][dna]{\includegraphics[width=0.33\hsize]{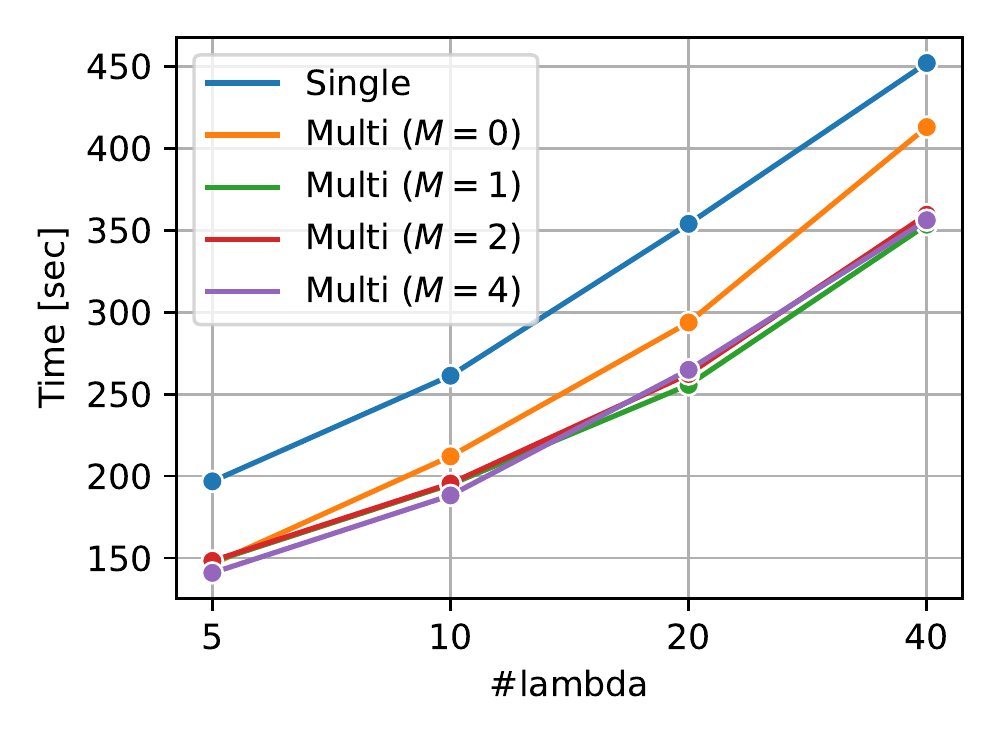}} \\
 \subfloat[][splice]{\includegraphics[width=0.33\hsize]{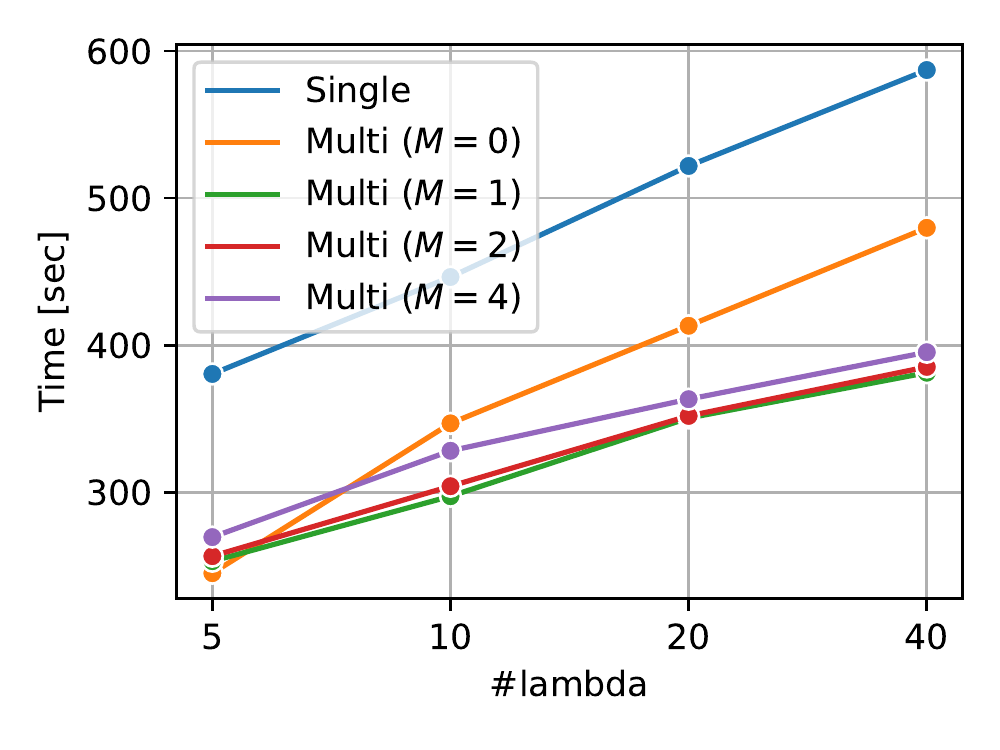}}
 \subfloat[][w1a]{\includegraphics[width=0.33\hsize]{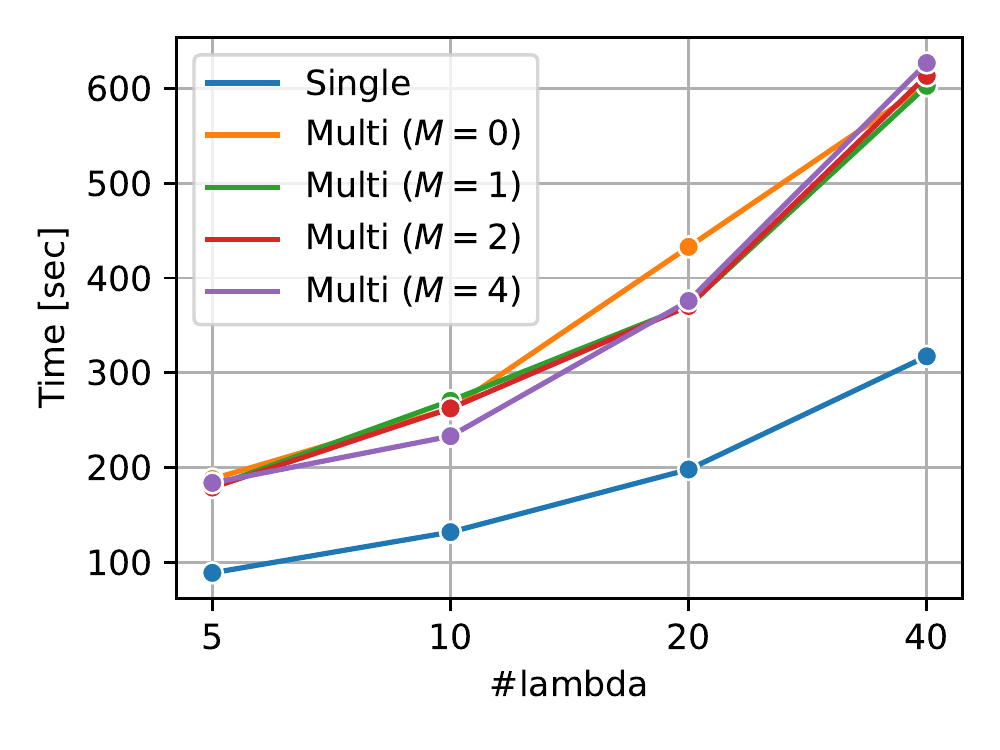}}
 \subfloat[][rhodopsin]{\includegraphics[width=0.33\hsize]{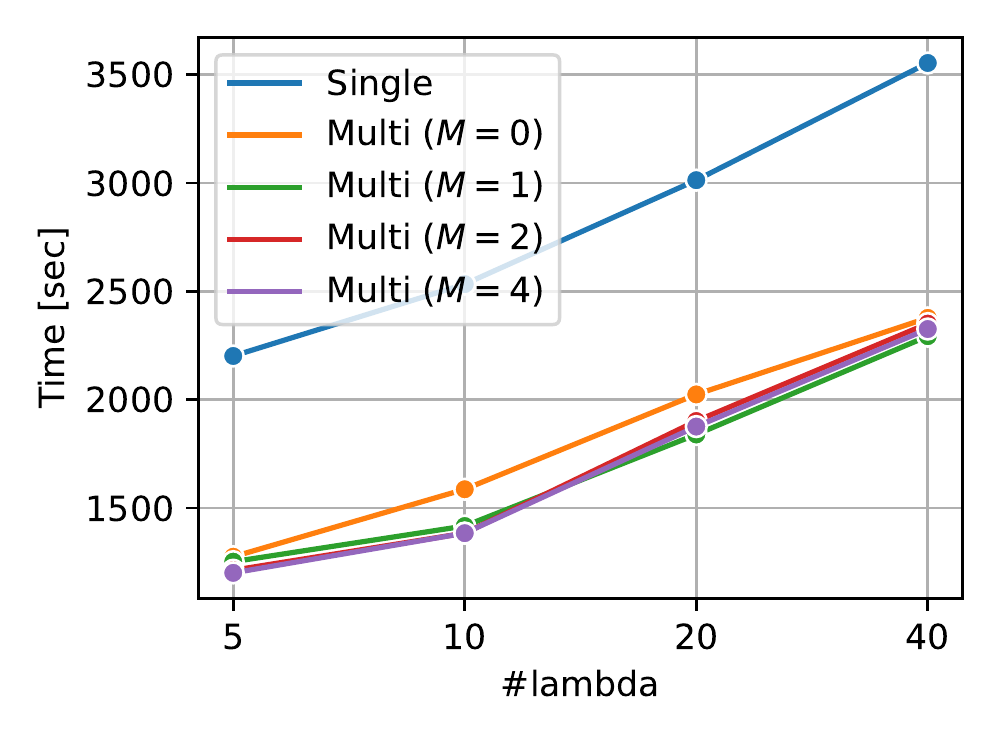}}
 \caption{Computation time for the entire regularization path for each dataset.
 The horizontal axis shows how many partitions of $\lambda$ were made.
 It can be confirmed that the use of multiple solutions is effective in most cases.
 In addition, the Multi-dynamic screening also often leads to a reduction in computation time.
 }
    \label{fig:result_multihp}
\end{figure}

\clearpage

\begin{figure}[p]
 \centering
 \subfloat[][a1a]{\includegraphics[width=0.33\hsize]{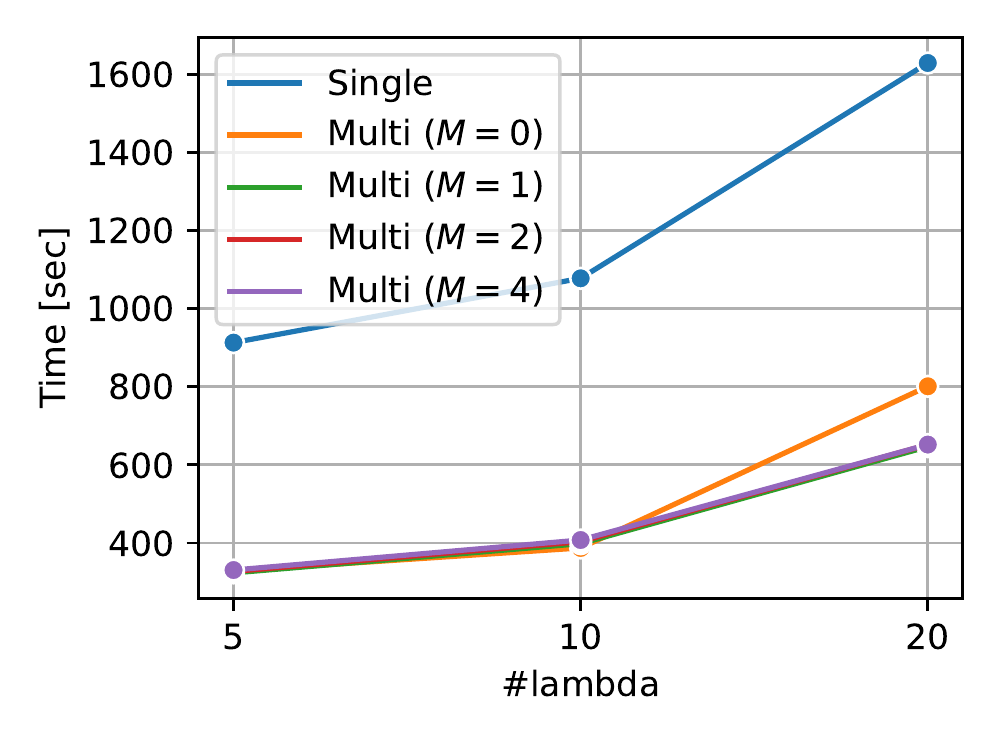}}
 \subfloat[][a9a]{\includegraphics[width=0.33\hsize]{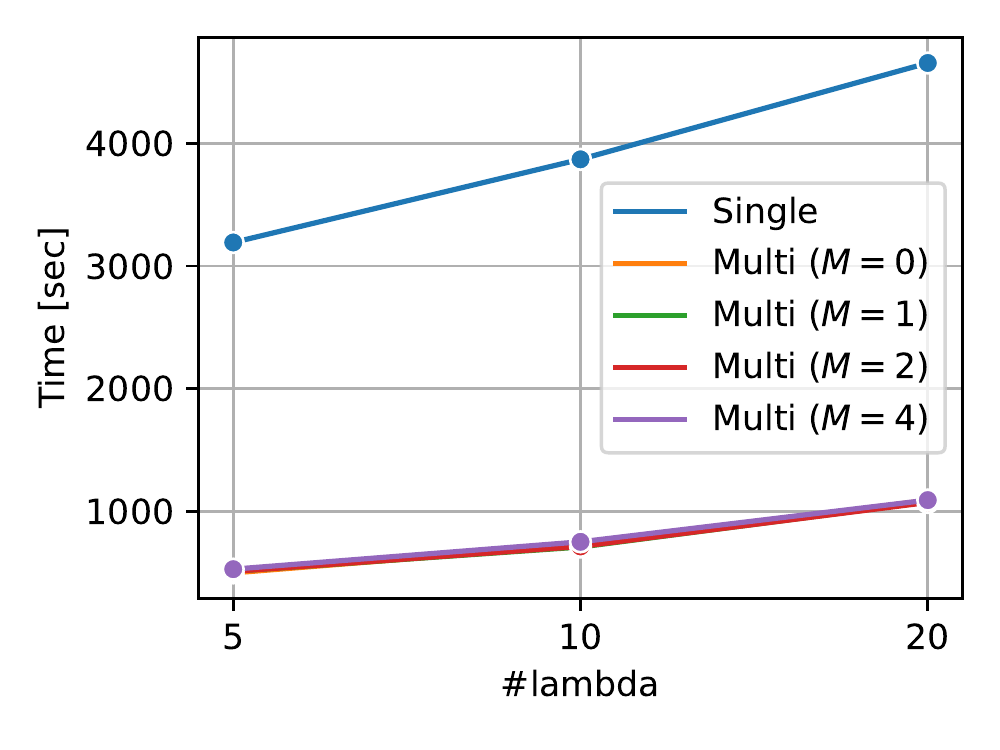}}
 \subfloat[][dna]{\includegraphics[width=0.33\hsize]{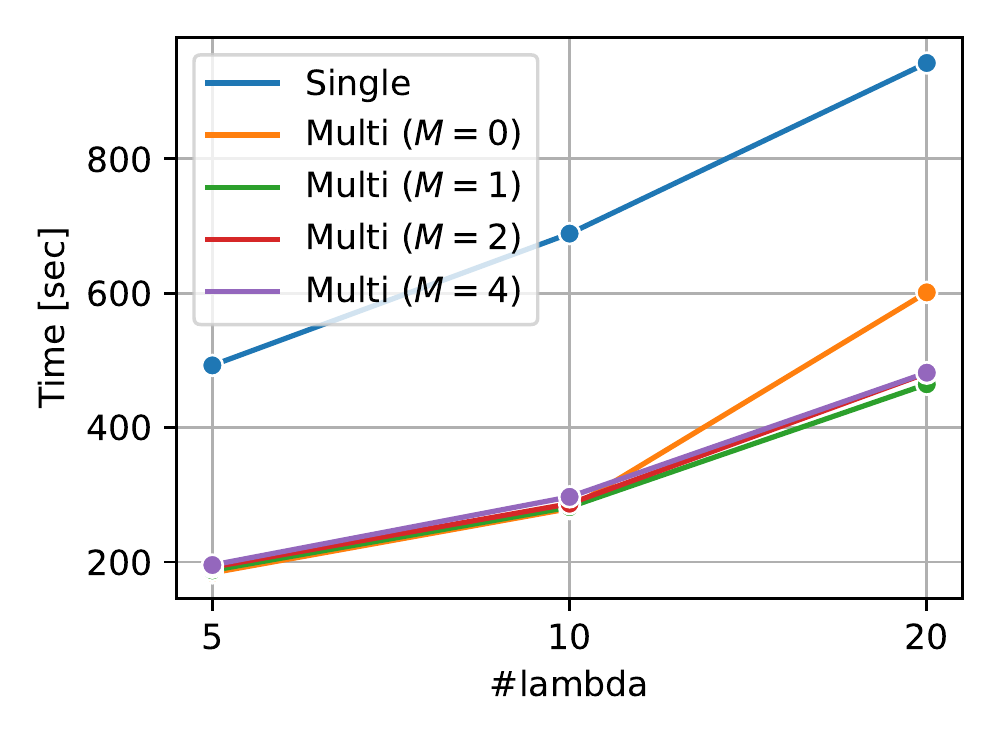}} \\
 \subfloat[][splice]{\includegraphics[width=0.33\hsize]{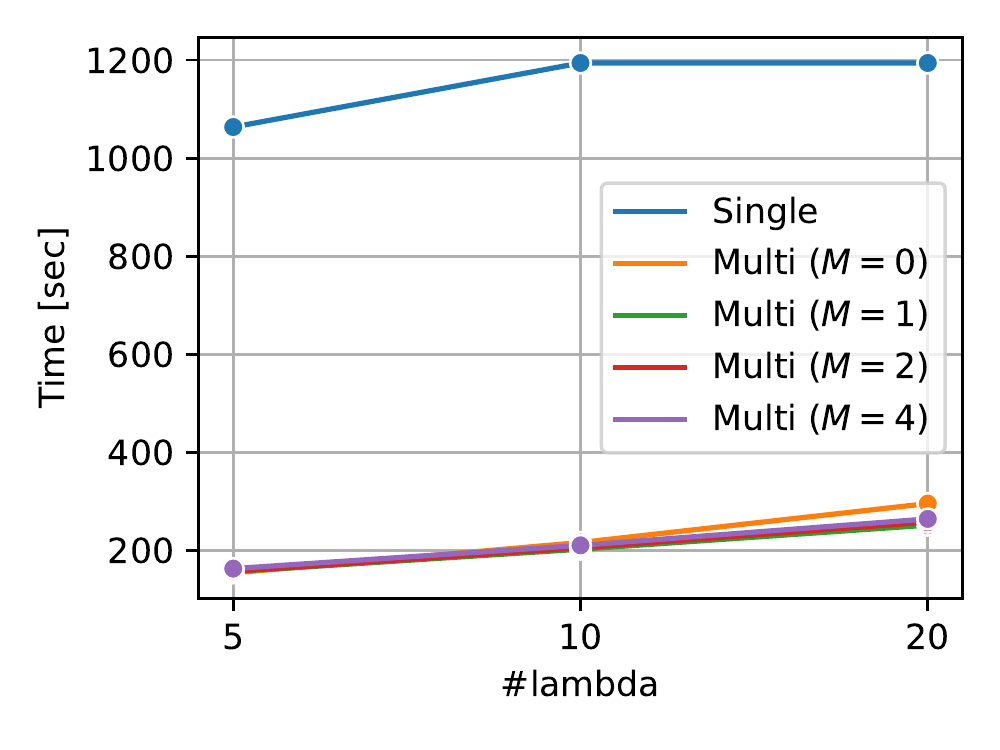}}
 \subfloat[][w1a]{\includegraphics[width=0.33\hsize]{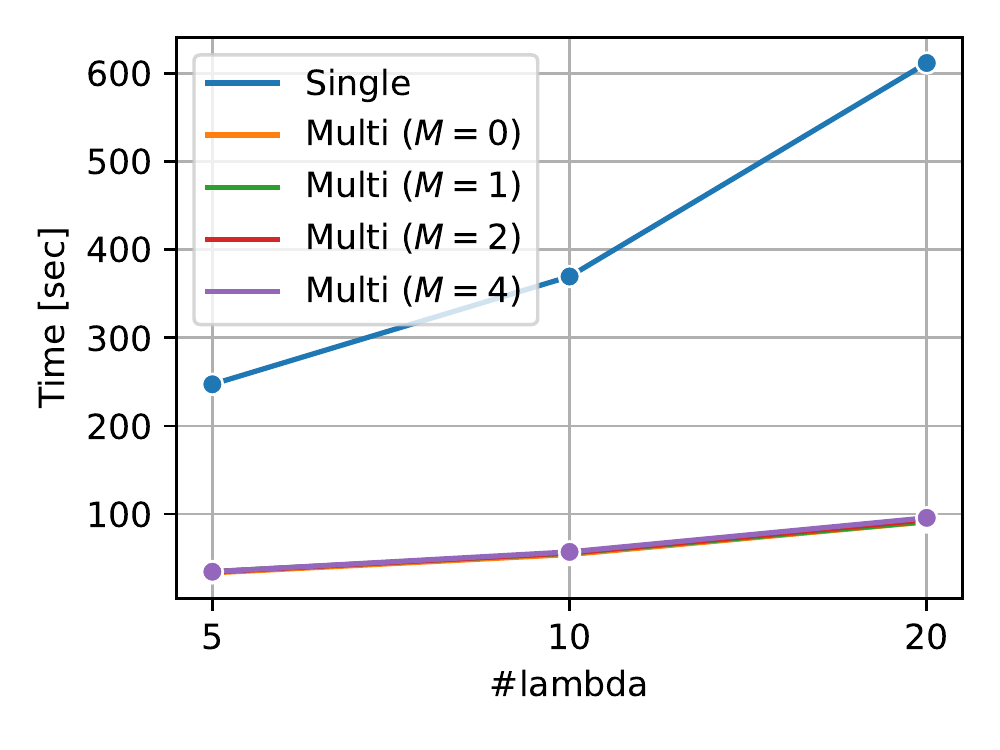}}
 \subfloat[][rhodopsin]{\includegraphics[width=0.33\hsize]{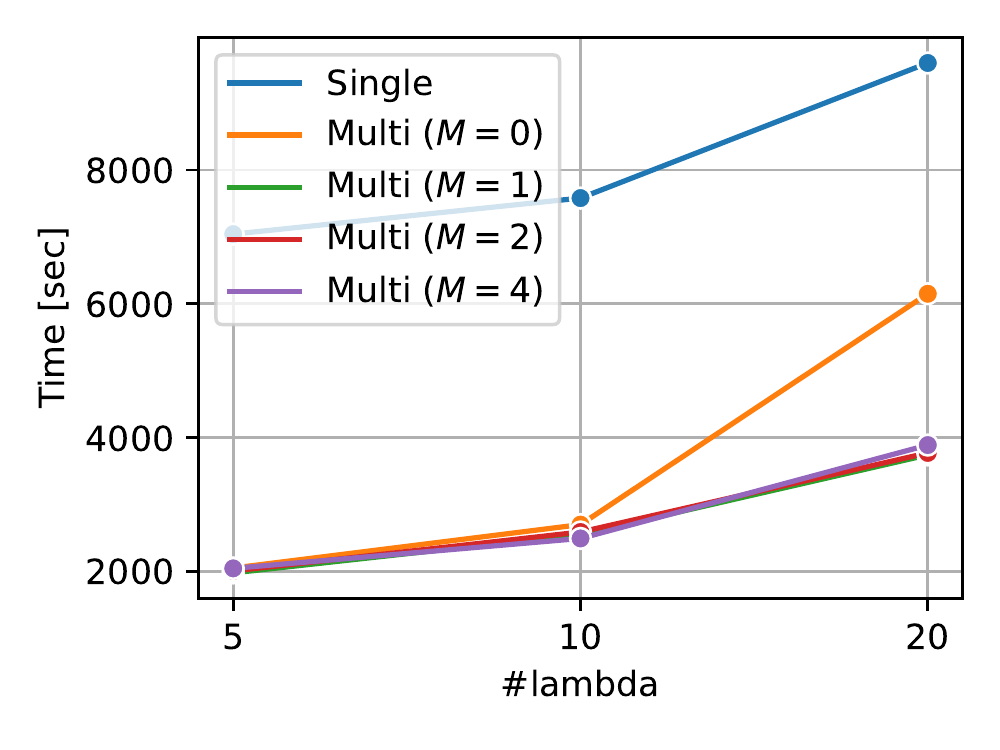}}
 \caption{The computation time for leave-one-out cross-validation for each dataset.
 The use of multiple solutions is effective for all the cases. 
 The Multi-dynamic screening is effective in settings where the number of $\lambda$ is large.
 }
 \label{fig:result_cv}
\end{figure}

\clearpage

\begin{figure}[p]
    \centering
    \includegraphics[width=\hsize]{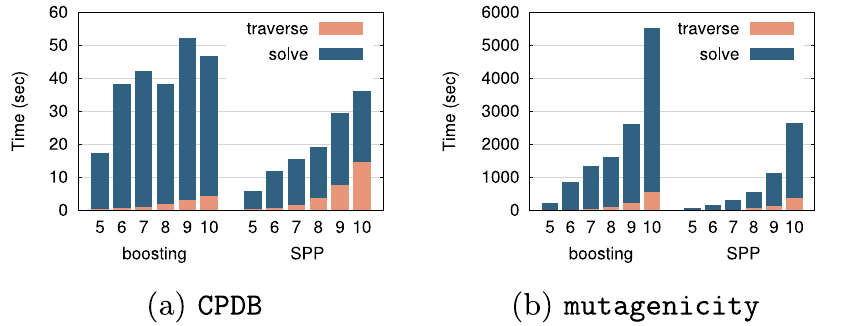}\\
    \includegraphics[width=\hsize]{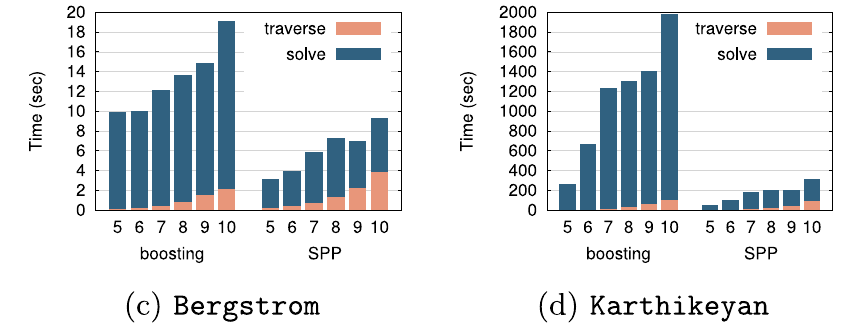}
    \caption{Computational time comparison for graph classification and regression.
    The horizontal axis represents the maximum length of patterns that are mined. Each bar contains computational time taken in the tree traverse (traverse) and the optimization procedure (solve) respectively.}
    \label{fig:time_graph_mining}
\end{figure}

\clearpage

\begin{figure}[p]
    \centering
    \includegraphics[width=\hsize]{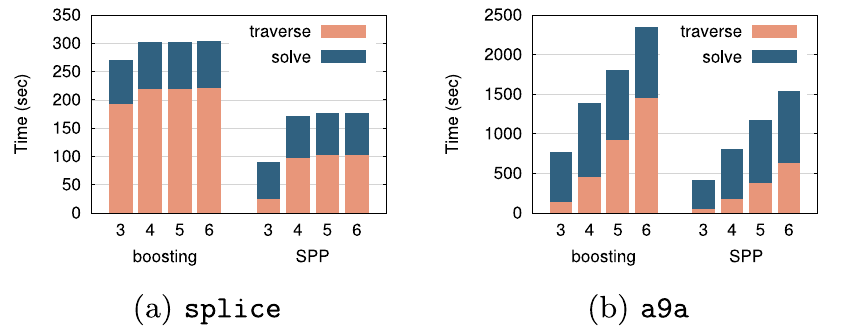}\\
    \includegraphics[width=\hsize]{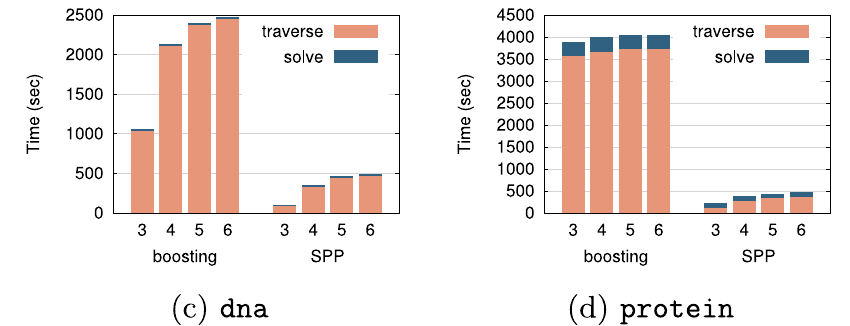}
    \caption{Computational time comparison for item-set classification and regression.
    The horizontal axis represents the maximum length of patterns that are mined. Each bar contains computational time taken in the tree traverse (traverse) and the optimization procedure (solve) respectively.}
    \label{fig:time_itemset_mining}
\end{figure}

\clearpage
\appendix

\clearpage

\section*{Note S1: Proof of Lemma \ref{lem:safe_screening_rule}}

Proofs of the following lemmas are similar to those in \citep{fercoq2015mind,ndiaye2017gap}, however, we explicitly state the proof for our setup.

Before proving Lemma \ref{lem:safe_screening_rule}, we prove two other lemmas: Lemmas \ref{lem:range_dual_optimal} and \ref{lem:range_inner_product}.

\begin{defi}
For a convex function $f: \mathbb{R}^n\to\mathbb{R}$ and a vector $\bm v\in\mathbb{R}^n$,
$\bm g\in\mathbb{R}^n$ is called a {\em subderivative} of $f$ at $\bm v$
if the following condition is met:
\begin{equation*}
\forall \bm z:\quad f(\bm z) - f(\bm v) \geq \bm g^\top(\bm z - \bm v).
\end{equation*}
We denote by $\partial f(\bm v)$ the set of all subderivatives of $f$ at $\bm v$, since such $\bm g$ may not be unique.
\end{defi}

\begin{lemm}
    \label{lem:range_dual_optimal}
    For any pair of feasible solutions $(\bzero, \azero)$, the dual optimal solution $\bm\alpha^*$ is within the intersection of a $\mathbb R^n$-ball $B(\bzero, \azero)$ and a hyperplane $H$ such that
    \begin{equation}
        \begin{split}
            B(\bzero, \azero) &\coloneqq\{\bm\alpha\in\mathbb R^n\mid \|\bm\alpha - \azero\|_2 \le r(\bzero, \azero)\},\\
            H & \coloneqq \{\bm\alpha\in\mathbb R^n\mid \bm\alpha^\top\bm 1 = 0\},
        \end{split}
    \end{equation}
    where $r(\bzero, \azero)$ and $\gamma$ are defined in Lemma \ref{lem:safe_screening_rule}.
\end{lemm}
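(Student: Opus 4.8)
The plan is to treat the two constraints cutting out $B(\bzero,\azero)\cap H$ separately. Membership $\bm\alpha^*\in H$ is immediate: $H=\{\bm\alpha\in\mathbb R^n\mid\bm\alpha^\top\bm 1=0\}$ is precisely the feasible set of the dual problem \eqref{eq:dual_problem}, so $\bm\alpha^{*\top}\bm 1=0$ holds by definition of $\bm\alpha^*$. Everything else goes into proving the ball bound $\|\bm\alpha^*-\azero\|_2\le r(\bzero,\azero)$.

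First I would record the structural fact that powers the argument: since $\nabla L$ is $\gamma$-Lipschitz (i.e.\ $L$ is $\gamma$-smooth), its convex conjugate $L^*$ is $\tfrac1\gamma$-strongly convex by the standard smoothness/strong-convexity duality. Hence $\bm\alpha\mapsto -L^*(-\bm\alpha)$ is $\tfrac1\gamma$-strongly concave, and since $\bm\alpha\mapsto -\Omega^*_{(\lambda,\kappa)}(X^\top\bm\alpha)$ is concave (the convex $\Omega^*_{(\lambda,\kappa)}$ composed with an affine map, negated), the dual objective $D$ is itself $\tfrac1\gamma$-strongly concave on $\mathbb R^n$.

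Next I would exploit that $\bm\alpha^*$ maximizes $D$ over the affine set $H$ and that $\azero\in H$. Applying strong concavity along the segment from $\bm\alpha^*$ towards $\azero$ gives, for $t\in(0,1]$,
\[
D\big(\bm\alpha^*+t(\azero-\bm\alpha^*)\big)\ \ge\ (1-t)D(\bm\alpha^*)+tD(\azero)+\tfrac{1}{2\gamma}\,t(1-t)\,\|\azero-\bm\alpha^*\|_2^2 .
\]
The point $\bm\alpha^*+t(\azero-\bm\alpha^*)$ again lies in $H$ (affine, both endpoints in $H$), so optimality of $\bm\alpha^*$ bounds the left side by $D(\bm\alpha^*)$; rearranging, dividing by $t$, and letting $t\downarrow 0$ yields $D(\bm\alpha^*)-D(\azero)\ge\tfrac{1}{2\gamma}\|\azero-\bm\alpha^*\|_2^2$. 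Finally weak duality gives $D(\bm\alpha^*)\le P(\bzero)$, whence $\tfrac{1}{2\gamma}\|\azero-\bm\alpha^*\|_2^2\le P(\bzero)-D(\azero)$, i.e.\ $\|\azero-\bm\alpha^*\|_2\le\sqrt{2\gamma(P(\bzero)-D(\azero))}=r(\bzero,\azero)$.

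The main obstacle is the duality between $\gamma$-Lipschitz gradient of $L$ and $\tfrac1\gamma$-strong convexity of $L^*$; I would either cite it (it is classical convex analysis, and is exactly the ingredient used in \citep{fercoq2015mind,ndiaye2017gap}) or derive it from the definition of the conjugate. A secondary technical point is making the constrained-optimality step clean when $D$ is merely concave rather than differentiable — the segment argument above is chosen precisely so that no subgradient selection is needed, only the inequality $D(\bm\alpha^*+t(\azero-\bm\alpha^*))\le D(\bm\alpha^*)$ and the affineness of $H$.
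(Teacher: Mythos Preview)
Your proof is correct and follows the same overall architecture as the paper's: $H$ is immediate from feasibility, $D$ is $\tfrac1\gamma$-strongly concave because $L^*$ is $\tfrac1\gamma$-strongly convex (the dual of $\gamma$-smoothness), optimality of $\bm\alpha^*$ yields $D(\bm\alpha^*)-D(\azero)\ge\tfrac1{2\gamma}\|\bm\alpha^*-\azero\|_2^2$, and then $D(\bm\alpha^*)$ is replaced by $P(\bzero)$.

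Two execution details differ, both in your favor. First, the paper handles the optimality step via a subgradient first-order condition, invoking $\bm g^\top(\bm\alpha^*-\azero)\ge 0$ for $\bm g\in\partial D(\bm\alpha^*)$ and citing Bertsekas; your segment/interpolation form of strong concavity sidesteps any subgradient selection entirely, which is cleaner given that $D$ need not be differentiable. Second, the paper appeals to strong duality $P(\bm\beta^*)=D(\bm\alpha^*)$ (citing Rockafellar--Wets) before bounding by $P(\bzero)$, whereas you observe that weak duality $D(\bm\alpha^*)\le P(\bzero)$ already suffices. Neither change alters the proof strategy, but both trim unnecessary hypotheses.
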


\begin{proof}
The constraint of $H$ is just derived from \eqref{eq:dual_problem}, so we focus on $B$ in the remainder of the proof.

First, since we assume that the loss function $L$ is $\gamma$-Lipschitz continuous, 
$L^*(\bm\alpha)$ is known to be $(1/\gamma)$-{\em strongly convex} and therefore $D(\bm\alpha)$ is $(1/\gamma)$-{\em strongly concave}, that is, 
the following holds:
\begin{equation}
D(\bm{v}) - D(\bm{u}) \geq \bm g^\top (\bm{v} - \bm{u}) + \frac{1}{2\gamma}\|\bm{v} - \bm{u}\|_2^2 \label{eq:strongly-concave}
\end{equation}
for any $\bm{u}, \bm{v}\in\mathbb{R}^n$
and any subderivative $\bm g \in \partial D(\bm v)$.
See, for example, Section 12.H in \citep{rockafellar2009variational}.

Substituting $\bm{u}\gets\azero$ and $\bm{v}\gets\bm\alpha^*$, we have
\begin{align}
D(\bm\alpha^*) - D(\azero) &\geq \bm g^\top (\bm\alpha^* - \azero) + \frac{1}{2\gamma}\|\bm\alpha^* - \azero\|_2^2 \nonumber \\
&\geq \frac{1}{2\gamma}\|\bm\alpha^* - \azero\|_2^2. \label{eq:strongly-concave-bound}
\end{align}
Here, \eqref{eq:strongly-concave-bound} is obtained by the fact that $\bm g^\top (\bm\alpha^* - \azero) \geq 0$ (See, for example, Proposition B.24 in \citep{bertsekas1999nonlinear}).

Moreover, by the calculation of the dual problem by Fenchel's duality theorem, $P(\bm\beta^*) = D(\bm\alpha^*)$ holds for \eqref{eq:primal_problem} and \eqref{eq:dual_problem} (called the {\em strong duality}; See, for example, Sections 11.H and 11.I of \citep{rockafellar2009variational}).
Therefore, from \eqref{eq:strongly-concave-bound} we have
\begin{align*}
\frac{1}{2\gamma}\|\bm\alpha^* - \azero\|_2^2
	& \leq D(\bm\alpha^*) - D(\azero)
	=  P(\bm\beta^*) - D(\azero) \\
& \leq P(\bzero) - D(\azero) \quad(\because~\bm\beta^*~\text{is the minimizer of}~P).
\end{align*}
This proves the lemma.
\end{proof}

\begin{lemm} \label{lem:range_inner_product}
Under the condition of the dual optimal solution $\bm \alpha^*$ in Lemma \ref{lem:range_dual_optimal},
we can represent an upper bound of $|\Xj^\top\bm\alpha^*|$ as 
\begin{equation}
	|\Xj^\top\bm\alpha^*|
    \leq \max_{\bm\alpha\in B(\bzero, \azero)\cap H} |\Xj^\top\bm\alpha|
    = |\Xj^\top\bm\azero| + r(\bzero, \azero)\|\Xj - \Pi_{\bm 1}(\Xj)\|_2, \label{eq:range_inner_product}
\end{equation}
where $\Pi_{\bm u}(\bm v)$ is defined in Lemma \ref{lem:safe_screening_rule}.
\end{lemm}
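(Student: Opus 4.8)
The plan is to dispose of the inequality first and then concentrate on the closed-form evaluation. The inequality in \eqref{eq:range_inner_product} is immediate: Lemma~\ref{lem:range_dual_optimal} places $\bm\alpha^*$ inside $B(\bzero,\azero)\cap H$, so $|\Xj^\top\bm\alpha^*|$ cannot exceed the supremum of $|\Xj^\top\bm\alpha|$ over that set. Hence the only substantive task is to show that $\max_{\bm\alpha\in B(\bzero,\azero)\cap H}|\Xj^\top\bm\alpha|$ equals $|\Xj^\top\bm\azero| + r(\bzero,\azero)\|\Xj-\Pi_{\bm 1}(\Xj)\|_2$.

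I would begin by pinning down the geometry of the feasible set. Since $\azero$ is dual feasible, it satisfies $\bm 1^\top\azero=0$, i.e.\ $\azero\in H$; because $H=\{\bm\alpha:\bm\alpha^\top\bm 1=0\}$ is a linear subspace with normal $\bm 1$, the set $B(\bzero,\azero)\cap H$ is exactly $\{\azero+\bm\eta : \bm\eta^\top\bm 1=0,\ \|\bm\eta\|_2\le r(\bzero,\azero)\}$. The next observation is that the $\bm 1$-component of $\Xj$ is invisible on $H$: for any $\bm\alpha\in H$ one has $\Pi_{\bm 1}(\Xj)^\top\bm\alpha = \tfrac{\bm 1^\top\Xj}{\bm 1^\top\bm 1}\,\bm 1^\top\bm\alpha = 0$, so $\Xj^\top\bm\alpha=(\Xj-\Pi_{\bm 1}(\Xj))^\top\bm\alpha$. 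Applying this identity both to $\bm\alpha=\azero$ and, after the shift, to the perturbation, any feasible point $\bm\alpha=\azero+\bm\eta$ satisfies $\Xj^\top\bm\alpha = \Xj^\top\azero + (\Xj-\Pi_{\bm 1}(\Xj))^\top\bm\eta$.

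I would then optimize the remaining linear term $(\Xj-\Pi_{\bm 1}(\Xj))^\top\bm\eta$ over $\{\bm\eta:\bm\eta^\top\bm 1=0,\ \|\bm\eta\|_2\le r(\bzero,\azero)\}$. The key point is that $\Xj-\Pi_{\bm 1}(\Xj)$ is orthogonal to $\bm 1$, hence already lies in $H$, so the Cauchy--Schwarz bound $(\Xj-\Pi_{\bm 1}(\Xj))^\top\bm\eta\le\|\Xj-\Pi_{\bm 1}(\Xj)\|_2\|\bm\eta\|_2$ is attained within the feasible set by the choice $\bm\eta=\pm r(\bzero,\azero)(\Xj-\Pi_{\bm 1}(\Xj))/\|\Xj-\Pi_{\bm 1}(\Xj)\|_2$ (and if $\Xj=\Pi_{\bm 1}(\Xj)$ the term is identically zero and the claimed formula holds trivially). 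Consequently $\Xj^\top\bm\alpha$ ranges over the interval with center $\Xj^\top\azero$ and half-width $r(\bzero,\azero)\|\Xj-\Pi_{\bm 1}(\Xj)\|_2$, and the maximum of $|\cdot|$ over such an interval is the absolute value of the center plus the half-width, giving exactly $|\Xj^\top\azero| + r(\bzero,\azero)\|\Xj-\Pi_{\bm 1}(\Xj)\|_2$.

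I do not expect a serious obstacle; the proof is essentially a projection-and-Cauchy--Schwarz computation. The only points requiring care are the verification that $\azero\in H$ (dual feasibility), the fact that the extremal perturbation $\bm\eta$ stays inside the hyperplane — which is precisely why $\Xj$ must be replaced by its projection $\Xj-\Pi_{\bm 1}(\Xj)$ onto $H$ — and the elementary passage from the maximum of $\Xj^\top\bm\alpha$ to the maximum of $|\Xj^\top\bm\alpha|$; the degenerate case $\Xj=\Pi_{\bm 1}(\Xj)$ should be flagged so the normalization is well defined.
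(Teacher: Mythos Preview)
Your proof is correct and arrives at the same identity, but by a different route than the paper. The paper sets up the Lagrangian $\mathcal L(\bm\alpha,\xi_1,\xi_2)=\Xj^\top\bm\alpha-\xi_1(\|\bm\alpha-\azero\|_2^2-r^2)-\xi_2\bm\alpha^\top\bm 1$, writes out the KKT conditions, solves explicitly for the multipliers $\xi_1^\#,\xi_2^\#$ and the maximizer $\bm\alpha^\#$, and then handles the absolute value by repeating the computation with $\Xj\gets-\Xj$. Your argument instead exploits the geometry directly: because $\azero\in H$, the feasible set is a ball of radius $r$ inside the subspace $H$ centered at $\azero$, and since $\Xj-\Pi_{\bm 1}(\Xj)$ already lies in $H$, Cauchy--Schwarz is tight without any constraint slack. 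Your approach is shorter and more transparent for this particular problem; the paper's KKT machinery is heavier here but is the natural template for the two-ball intersection in Theorem~\ref{the:multiple_safe_screening_rule}, where a clean projection-and-Cauchy--Schwarz shortcut is not available.
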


\begin{proof}
We prove this via the method of Lagrange multiplier, that is,
\begin{align}
& \max_{\bm\alpha\in B(\bzero, \azero)\cap H} \Xj^\top\bm\alpha
	= \max_{\bm{\alpha}\in\mathbb{R}^n, \xi_1\in\mathbb{R}, \xi_2\in\mathbb{R}} {\cal L}(\bm{\alpha}, \xi_1, \xi_2), \nonumber \\
& \text{where}\qquad 
	{\cal L}(\bm{\alpha}, \xi_1, \xi_2)
	= \Xj^\top\bm\alpha
	- \xi_1\left(\|\bm\alpha - \azero\|_2^2 - r(\bzero, \azero)^2 \right)
	- \xi_2 \bm\alpha^\top\bm 1.
\end{align}
Let
$(\bm\alpha^\#, \xi_1^\#, \xi_2^\#) := \arg\max_{\bm{\alpha}\in\mathbb{R}^n, \xi_1\in\mathbb{R}, \xi_2\in\mathbb{R}} {\cal L}(\bm{\alpha}, \xi_1, \xi_2)$
be the optimal solution of the latter maximization.
Then it is known to satisfy the following conditions ({\em Karush-Kuhn-Tucker condition}):
\begin{align}
& \nabla L(\bm{\alpha}, \xi_1, \xi_2)|_{\bm\alpha^\#, \xi_1^\#, \xi_2^\#} = \bm 0 \label{eq:lagr-derivative}\\
& \xi_1^\# \geq 0 \label{eq:lagr-xi1}\\
& \|\bm\alpha^\# - \azero\|_2^2 - r(\bzero, \azero)^2 \leq 0 \label{eq:lagr-ball}\\
& \xi_1^\# ( \|\bm\alpha^\# - \azero\|_2^2 - r(\bzero, \azero)^2 ) = 0 \label{eq:lagr-ball-xi1}\\
& \xi_2^\# \geq 0 \label{eq:lagr-xi2} \\
& \bm\alpha^{\#\top}\bm 1 = 0 \label{eq:lagr-plane}
\end{align}
In this setup, it is clear that the maximization of ${\cal L}$ must be infinite if $\xi_1 = 0$.
So, assuming $\xi_1^\#\neq 0$ in \eqref{eq:lagr-xi1}, \eqref{eq:lagr-ball} and \eqref{eq:lagr-ball-xi1}, we have
\begin{align}
\|\bm\alpha^\# - \azero\|_2^2 - r(\bzero, \azero)^2 = 0. \label{eq:lagr-ball-xi1-eq}
\end{align}

In addition, \eqref{eq:lagr-derivative} is computed as follows:
\begin{align}
& \nabla L(\bm{\alpha}, \xi_1, \xi_2)|_{\bm\alpha^\#, \xi_1^\#, \xi_2^\#} = \Xj - 2 \xi_1^\# (\bm\alpha^\# - \azero) - \xi_2^\# \bm 1 = \bm 0 \label{eq:lagr-derivative-calc} \\
& \Xj^\top\bm 1 - 2 \xi_1^\# (\bm\alpha^\# - \azero)^\top\bm 1 - \xi_2^\# \|\bm 1\|_2^2 = 0 \nonumber\\
& \Xj^\top\bm 1 = \xi_2^\# \|\bm 1\|_2^2 \qquad (\because \bm\alpha^{\#\top}\bm 1 = \azero^\top\bm 1 = 0) \nonumber\\
& \therefore \xi_2^\# = \frac{\Xj^\top\bm 1}{\|\bm 1\|_2^2}. \nonumber
\end{align}
Moreover, substituting $\xi_2^\#$ in \eqref{eq:lagr-derivative-calc} we have
\begin{align}
& \Xj - 2 \xi_1^\# (\bm\alpha^\# - \azero) - \frac{\Xj^\top\bm 1}{\|\bm 1\|_2^2} \bm 1
	= \Xj - 2 \xi_1^\# (\bm\alpha^\# - \azero) - \Pi_{\bm 1}(\Xj) = \bm 0 \label{eq:lagr-derivative-xi2}\\
& 4 (\xi_1^\#)^2 \|\bm\alpha^\# - \azero\|_2^2 = 4 (\xi_1^\#)^2 r(\bzero, \azero)^2 = \|\Xj - \Pi_{\bm 1}(\Xj)\|_2^2 \qquad(\because \eqref{eq:lagr-ball-xi1-eq}) \nonumber\\
& \therefore \xi_1^\# = \frac{\|\Xj - \Pi_{\bm 1}(\Xj)\|_2}{2 r(\bzero, \azero)}. \nonumber
\end{align}
As a result, substituting $\xi_1^\#$ in \eqref{eq:lagr-derivative-xi2} we have
\begin{align*}
& \bm\alpha^\# = \azero + r(\bzero, \azero) \frac{\Xj - \Pi_{\bm 1}(\Xj)}{\|\Xj - \Pi_{\bm 1}(\Xj)\|_2},
\end{align*}
and
\begin{align}
& \max_{\bm\alpha\in B(\bzero, \azero)\cap H} \Xj^\top\bm\alpha = L(\bm{\alpha}^\#, \xi_1^\#, \xi_2^\#) = \Xj^\top\bm\alpha^\# \nonumber\\
& = [\Xj - \Pi_{\bm 1}(\Xj)]^\top \bm\alpha^\# \qquad(\because [\Pi_{\bm 1}(\Xj)]^\top\bm\alpha^\# = 0~\text{by}~\eqref{eq:lagr-plane}) \nonumber\\
& = [\Xj - \Pi_{\bm 1}(\Xj)]^\top\azero + r(\bzero, \azero) \|\Xj - \Pi_{\bm 1}(\Xj)\|_2 \nonumber\\
& = \Xj^\top\azero + r(\bzero, \azero) \|\Xj - \Pi_{\bm 1}(\Xj)\|_2. \label{eq:lagr-max-pos-innerproduct}
\end{align}

This concludes $\max_{\bm\alpha\in B(\bzero, \azero)\cap H} \Xj^\top\bm\alpha = \Xj^\top\azero + r(\bzero, \azero) \|\Xj - \Pi_{\bm 1}(\Xj)\|_2$. The result consequently proves that
\begin{align*}
& \max_{\bm\alpha\in B(\bzero, \azero)\cap H} (-\Xj)^\top\bm\alpha \\
& = -\Xj^\top\azero + r(\bzero, \azero) \|-\Xj - \Pi_{\bm 1}(-\Xj)\|_2 \quad(\because~\Xj\gets(-\Xj)~\text{in}~\eqref{eq:lagr-max-pos-innerproduct}) \\
& = -\Xj^\top\azero + r(\bzero, \azero) \|\Xj - \Pi_{\bm 1}(\Xj)\|_2.
\end{align*}
Combining them, we have $\max_{\bm\alpha\in B(\bzero, \azero)\cap H} |\Xj^\top\bm\alpha| = |\Xj^\top\azero| + r(\bzero, \azero) \|\Xj - \Pi_{\bm 1}(\Xj)\|_2$.
\end{proof}

Finally, Lemma \ref{lem:safe_screening_rule} is proved as follows:

\begin{proof}[Proof of Lemma \ref{lem:safe_screening_rule}]
Suppose that $u_j(\bzero, \azero) < \lambda_1$. Then, by Lemma \ref{lem:range_inner_product} we have
\begin{align*}
& \lambda > u_j(\bzero, \azero) := |\Xj^\top\azero| + r(\bzero, \azero) \|\Xj - \Pi_{\bm 1}(\Xj)\|_2 \\
& \geq \max_{\bm\alpha\in B(\bzero, \azero)\cap H} |\Xj^\top\bm\alpha| \geq |\Xj^\top\bm\alpha^*|.
\end{align*}
By equation \eqref{eq:optimality}, $\beta^*_j = 0$ must hold.
\end{proof}


\section*{Note S2: Proof of Theorem \ref{the:safe_pattern_pruning_rule}}

\begin{proof}
 First, we prove that the SPP-score in \eq{eq:spp_score} is greater than or equal to the safe screening score in \eq{eq:safe_screening_score}, i.e.,  $v_j(\bzero, \azero) \ge u_j(\bzero, \azero)$. 
 This can be shown as
 \begin{align*}
  v_j(\tilde{\bm \beta}, \tilde{\bm \alpha})
  -
  u_j(\tilde{\bm \beta}, \tilde{\bm \alpha})
  &
  =
  \max
  \left\{ 
  \sum_{i:\tilde{\alpha}_i > 0}x_{ij}\tilde{\alpha}_i,
  -\sum_{i:\tilde{\alpha}_i < 0}x_{ij}\tilde{\alpha}_i
  \right\}
  - 
  |\bm x_{:j}^\top \tilde{\bm \alpha}| + r(\bzero, \azero)\left(\|\Xj\|_2 - \|\Xj-\Pi_{\bm 1}(\Xj)\|_2\right)
  \\
  &
  \ge
  \max
  \left\{ 
  \sum_{i:\tilde{\alpha}_i > 0}x_{ij}\tilde{\alpha}_i,
  -\sum_{i:\tilde{\alpha}_i < 0}x_{ij}\tilde{\alpha}_i
  \right\}
  - 
   \max
   \left\{
    \sum_{i \in [n]} x_{ij} \alpha_i,
    -\sum_{i \in [n]} x_{ij} \alpha_i
   \right\}
  \\
  &
  \ge 0.
 \end{align*}
 Therefore, using Lemma~\ref{lem:safe_screening_rule}, we have
 \[
 v_j(\bzero, \azero) < \lambda \Rightarrow u_j(\bzero, \azero) < \lambda \Rightarrow \beta_j^*= 0.
 \]
 Next, we prove that, for pair of pattern $p_j$ and $p_k$ such that $p_k \sqsubset p_j$, the SPP score of $p_j$ is grater than or equal to that of $p_k$, i.e., $v_j(\tilde{\bm \beta}, \tilde{\bm \alpha}) \ge v_k(\tilde{\bm \beta}, \tilde{\bm \alpha})$.
 To show this, we prove that each of the two terms of the SPP score satisfies the intended inequality relationship, thereby showing that the SPP score as a whole also satisfies the inequality relationship.
 The inequality relationship for the first term of the SPP score is shown as follows.
 From Lemma~\ref{lem:monotnicity}, it is clear that
 \begin{align*}
  \sum_{i:\tilde{\alpha}_i > 0}x_{ij}\tilde{\alpha}_i \ge \sum_{i:\tilde{\alpha}_i > 0}x_{ik}\tilde{\alpha}_i.
 \end{align*}
 Therefore, we have
 \begin{equation}        
  \max
  \left\{ 
   \sum_{i:\tilde{\alpha}_i > 0}x_{ij}\tilde{\alpha}_i,
   -\sum_{i:\tilde{\alpha}_i < 0}x_{ij}\tilde{\alpha}_i
  \right\}
  \ge 
  \max
  \left\{ 
   \sum_{i:\tilde{\alpha}_i > 0}x_{ik}\tilde{\alpha}_i,
   -\sum_{i:\tilde{\alpha}_i < 0}x_{ik}\tilde{\alpha}_i
  \right\}.
  \label{eq:pruning_criteria_first}
 \end{equation}
 The inequality relationship for the second term of the SPP score is easily shown by noting that 
 \begin{align*}
  \|\bm x_{:j}\|_2 \ge \|\bm x_{:k}\|_2.
 \end{align*}
 This means that 
 \begin{align*}
  v_j(\tilde{\bm \beta},  \tilde{\bm \alpha}) < \lambda
  ~\Rightarrow~
  v_k(\tilde{\bm \beta},  \tilde{\bm \alpha}) < \lambda
  ~\Rightarrow~
  \beta^*_k = 0
  ~
  ~\forall k \in [d] \text{ s.t. } p_k \sqsubset p_j. 
 \end{align*}
\end{proof}


\section*{Note S3: Proof of Theorem \ref{the:multiple_safe_screening_rule}}

In order to prove the theorem, we first prove the following lemma.

\begin{lemm}[Union of two hyperspheres]
    \label{lem:intersection_of_two}
    Suppose that two hyperspheres in $\mathbb R^n$, denoted by
    $S_1 = \{\bm v\in\mathbb R^n\mid\|\bm v - \bm c_1\|_2 = r_1\}$ and
    $S_2 = \{\bm v\in\mathbb R^n\mid\|\bm v - \bm c_2\|_2 = r_2\}$,
    satisfies $S_1\cap S_2\not\equiv\emptyset$ and $S_1 \not\equiv S_2$, that is,
    \begin{align}
	& \delta := \|\bm c_1 - \bm c_2\|_2 > 0, \label{eq:union-not-concentric} \\
	& r_1 + r_2 \geq \delta, \label{eq:union-no-intersection} \\
	& |r_1 - r_2| \leq \delta. \label{eq:union-inclusive}
	\end{align}
    Then, the intersection of them $S_1\cap S_2$ is identical to the intersection $S^\prime\cap H^\prime$ of
    the following hypersphere $S^\prime$ and hyperplane $H^\prime$:
    \begin{align*}
        S^\prime &= \{\bm v\in\mathbb R^n\mid \|\bm v - \bm c^\prime\|_2 = r^\prime\},\\
        H^\prime &= \{\bm v\in\mathbb R^n\mid (\bm v - \bm c^\prime)^\top(\bm c_1 - \bm c_2) = 0\},
    \end{align*}
    where $\bm c^\prime$ (center of $S^\prime$) and $r^\prime$ (radius of $S^\prime$) are defined as follows:
    \begin{align*}
        \bm c^\prime &= t\bm c_1 + (1-t)\bm c_2,\\
        r^\prime &= \sqrt{r_2^2 - t^2 \delta^2}, \\
        t &= \frac{1}{2}\left(1 + \frac{r_2^2 - r_1^2}{\delta^2}\right).\\
    \end{align*}
    \label{lem:intersection_sphere}
\end{lemm}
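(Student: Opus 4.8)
The plan is to reduce the two quadratic constraints defining $S_1\cap S_2$ to one quadratic constraint plus one linear constraint, and then to recognize that on the linear locus the remaining quadratic constraint describes $S'$. First I would record the one algebraic identity on which everything rests. Since $\delta=\|\bm c_1-\bm c_2\|_2>0$ by \eqref{eq:union-not-concentric}, the scalar $t$ is well defined, and directly from its definition
\[
(2t-1)\delta^2 = r_2^2 - r_1^2,
\qquad\text{equivalently}\qquad
r_1^2 - (1-t)^2\delta^2 = r_2^2 - t^2\delta^2 = (r')^2 .
\]
Everything below is bookkeeping around this one equation.

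Next I would show $S_1\cap S_2\subseteq H'$. Writing $\bm v=\bm c'+\bm u$ and subtracting the two sphere equations $\|\bm v-\bm c_1\|_2^2=r_1^2$ and $\|\bm v-\bm c_2\|_2^2=r_2^2$, the $\|\bm v\|_2^2$ terms cancel, leaving a relation that is linear in $\bm v$; substituting $\bm c'=t\bm c_1+(1-t)\bm c_2$ and using $(2t-1)\delta^2=r_2^2-r_1^2$ collapses it to $\bm u^\top(\bm c_1-\bm c_2)=0$, i.e.\ $(\bm v-\bm c')^\top(\bm c_1-\bm c_2)=0$, which is exactly membership in $H'$. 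Then I would work on $H'$ by Pythagoras: for $\bm v\in H'$ the vector $\bm v-\bm c'$ is orthogonal to $\bm c_1-\bm c_2$, while $\bm c'-\bm c_1=-(1-t)(\bm c_1-\bm c_2)$ and $\bm c'-\bm c_2=t(\bm c_1-\bm c_2)$ are parallel to it, so $\|\bm v-\bm c_1\|_2^2=\|\bm v-\bm c'\|_2^2+(1-t)^2\delta^2$ and $\|\bm v-\bm c_2\|_2^2=\|\bm v-\bm c'\|_2^2+t^2\delta^2$. Hence, for $\bm v\in H'$, each of the conditions $\bm v\in S_1$, $\bm v\in S_2$, $\bm v\in S'$ is equivalent to $\|\bm v-\bm c'\|_2^2$ taking the common value $r_1^2-(1-t)^2\delta^2=r_2^2-t^2\delta^2=(r')^2$. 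Combining with the inclusion $S_1\cap S_2\subseteq H'$ gives $S_1\cap S_2=S_1\cap S_2\cap H'=S'\cap H'$, which is the claim.

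Finally I would check that $S'$ is a genuine hypersphere (possibly degenerating to the point $\{\bm c'\}$), i.e.\ $(r')^2=r_2^2-t^2\delta^2\ge 0$: substituting $t\delta=(\delta^2+r_2^2-r_1^2)/(2\delta)$ and factoring the difference of squares yields $4\delta^2\big(r_2^2-t^2\delta^2\big)=(r_1+r_2-\delta)(\delta+r_1-r_2)(\delta+r_2-r_1)(\delta+r_1+r_2)$, and all four factors are nonnegative precisely by \eqref{eq:union-no-intersection} and \eqref{eq:union-inclusive}. I do not anticipate a deep obstacle here; the only points requiring care are (i) pinning down not just the normal direction $\bm c_1-\bm c_2$ of the hyperplane but the fact that it passes through $\bm c'$ — which is exactly the role of the identity $(2t-1)\delta^2=r_2^2-r_1^2$ — and (ii) recognizing that hypotheses \eqref{eq:union-not-concentric}–\eqref{eq:union-inclusive} are exactly what makes $t$ well defined and keeps $(r')^2\ge 0$.
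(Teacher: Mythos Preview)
Your proof is correct and complete. The overall logical skeleton---subtract the two sphere equations to get a linear constraint, then analyze the remaining quadratic constraint on that hyperplane, then verify $(r')^2\ge 0$ via the four-factor product---matches the paper's, and your nonnegativity check is literally the same factorization the paper uses.

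Where you differ is in the execution of the first two steps. The paper introduces an explicit isometry $E$ that sends $\bm c_1$ to the origin and $\bm c_2$ to $[\delta,0,\ldots,0]^\top$, then computes in coordinates: subtracting the transformed sphere equations pins down the first coordinate $v_1'$ (this is the hyperplane), and the remaining coordinates satisfy a sphere equation of radius $r'$. At the end it maps the center back to the original space to identify $\bm c'$. Your argument is coordinate-free: you subtract the two sphere equations directly to land on $H'$ (using the identity $(2t-1)\delta^2=r_2^2-r_1^2$ to see that the hyperplane passes through $\bm c'$), and then on $H'$ you invoke Pythagoras via the orthogonal decompositions $\bm v-\bm c_i=(\bm v-\bm c')+(\bm c'-\bm c_i)$ to show that membership in $S_1$, $S_2$, and $S'$ all reduce to the single condition $\|\bm v-\bm c'\|_2^2=(r')^2$. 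This buys you a cleaner equality argument: because you obtain \emph{equivalences} on $H'$, both inclusions $S_1\cap S_2\subseteq S'\cap H'$ and $S'\cap H'\subseteq S_1\cap S_2$ fall out at once, whereas the paper's presentation is phrased starting from $\bm v\in S_1\cap S_2$ and relies on the reader to notice that the derived coordinate equations are reversible. The paper's coordinate approach, on the other hand, makes the geometric picture (a lower-dimensional sphere sitting in the hyperplane $v_1'=\text{const}$) completely explicit.
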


\begin{proof}[Proof of Lemma \ref{lem:intersection_of_two}]
	Let $E: \mathbb{R}^n\to\mathbb{R}^n$ be an distance-preserving mapping such that
    \begin{align*}
        E\bm c_1 &= \bm 0,\\
        E\bm c_2 &= [\delta, \underbrace{0,\ldots, 0}_{n-1}]^\top,
    \end{align*}
	where $\delta = \|\bm c_1 - \bm c_2\|_2$.
	Note that such a mapping can be obtained as follows:
	\begin{itemize}
	\item Let $E\bm v := \Theta(\bm v - \bm c_1)$ ($\Theta\in\mathbb{R}^{n\times n}$).
	\item Set $\Theta_{1:} = \frac{1}{\delta^2}(\bm c_1 - \bm c_2)$.
	\item Set other rows of $\Theta$ so that $\Theta$ is an orthogonal matrix. This can be done by Gram-Schmidt algorithm.
	\end{itemize}

	Let $\bm v\in S_1 \cap S_2$, and $\bm v^\prime$ be
    \[
        E\bm v = \bm v^\prime = [v_1^\prime,\ldots,v_n^\prime]^\top.
    \]
    Then, since $E$ is distance-preserving, we have
    \begin{align}
        & \|E\bm v- E\bm c_1\|_2^2 = r_1^2 \Longleftrightarrow \sum_{i=1}^n v_i^{\prime2} = r_1^2, \label{eq:equal_length_1} \\
        & \|E\bm v - E\bm c_2\|_2^2 = r_2^2 \Longleftrightarrow \sum_{i=2}^n v_i^{\prime2} + (v_1^\prime - \delta)^2 = r_2^2. \label{eq:equal_length_2}
    \end{align}
    Taking the difference between equations (\ref{eq:equal_length_1}) and (\ref{eq:equal_length_2}), we have
    \begin{align}
	& v_1^\prime = \frac{r_1^2 - r_2^2 + \delta^2}{2\delta}, \label{eq:union-v1} \\
	& \sum_{i=2}^n v_i^{\prime2}
		= r_2^2 - (v_1^\prime  - \delta)^2
		= r_2^2 - \left(\frac{r_1^2 - r_2^2 - \delta^2}{2\delta} \right)^2
		= r_2^2 - t^2 \delta^2. \label{eq:union-v2}
	\end{align}
	Note that the value \eqref{eq:union-v2} is nonnegative because
    \begin{align*}
	r_2^2 - \left(\frac{r_1^2 - r_2^2 - \delta^2}{2\delta} \right)^2
	&= \frac{1}{2\delta}(2\delta r_2 + r_1^2 - r_2^2 - \delta^2)(2\delta r_2 - r_1^2 + r_2^2 + \delta^2) \\
	&= \frac{1}{2\delta}[r_1^2 - (r_2 - \delta)^2][(r_2 + \delta)^2 - r_1^2] \\
	&= \frac{1}{2\delta}(\underbrace{r_1 + r_2 - \delta}_{\geq 0~\because\text{\eqref{eq:union-no-intersection}}})(\underbrace{r_1 - r_2 + \delta}_{\geq 0~\because\text{\eqref{eq:union-inclusive}}})(\underbrace{r_1 + r_2 + \delta}_{> 0~\text{clearly}})(\underbrace{-r_1 + r_2 + \delta}_{\geq 0~\because\text{\eqref{eq:union-inclusive}}}).
	\end{align*}
	In summary, equation \eqref{eq:union-v1} implies that $v_1^\prime$ of $\bm v^\prime$ is constant.
    In addition, equation \eqref{eq:union-v2} implies that
    $[v_2^\prime,\ldots,v_n^\prime]$ is on a hypersphere whose center is
    $[v_2^\prime,\ldots,v_n^\prime]^\top = [\underbrace{0,\ldots, 0}_{n-1}]^\top$.
    So, if we take a hypersphere
    whose center is $E\bm c^\prime = [v_1^\prime, \underbrace{0,\ldots, 0}_{n-1}]^\top$ and
    whose radius is $r^\prime = \sqrt{r_2^2 - t^2 \delta^2}$,
    then the intersection of it and the hyperplane $v_1^\prime = \frac{r_1^2 - r_2^2 + \delta^2}{2\delta}$
    composes $\bm v^\prime$.
    
    Finally we derive the center of the hypersphere in the original space $\bm c^\prime$.
    (Note that $r^\prime$ is the same between in the original space and the space after applying $E$,
    since $E$ is distance-preserving.)
    Again, in the space after applying $E$,
    \begin{align*}
	E\bm c_1 = [0, \underbrace{0,\ldots, 0}_{n-1}]^\top,
	\quad
	E\bm c^\prime = [v_1^\prime, \underbrace{0,\ldots, 0}_{n-1}]^\top,
	\quad
	E\bm c_2 = [\delta, \underbrace{0,\ldots, 0}_{n-1}]^\top.
	\end{align*}
	Since $E$ is distance-preserving, $\bm c^\prime$ in the original space can be computed as
    \begin{align*}
	\bm c^\prime &= \bm c_1 + \frac{v_1^\prime}{\delta} (\bm c_2 - \bm c_1) \\
	&= \bm c_1 + \frac{r_1^2 - r_2^2 + \delta^2}{2\delta^2} (\bm c_2 - \bm c_1) \\
	&= \frac{\delta^2 - r_1^2 + r_2^2}{2\delta^2} \bm c_1 + \left( 1 - \frac{\delta^2 - r_1^2 + r_2^2}{2\delta^2} \right) \bm c_2 = t\bm c_1 + (1-t)\bm c_2.
	\end{align*}
	This derives $\bm c^\prime$ and $t$ in the lemma.
\end{proof}

\begin{proof}[Proof of Theorem \ref{the:multiple_safe_screening_rule}]
	In order to prove
    \[
        \max_{\bm\alpha\in B_1\cap B_2\cap H} |\Xj^\top\bm\alpha| = \max\{u_j^+, u_j^-\},
    \]
    first we note that
    \[
        \max_{\bm\alpha} |\Xj^\top\bm\alpha| = \max\{\max_{\bm\alpha}\Xj^\top\bm\alpha, \max_{\bm\alpha}(-\Xj)^\top\bm\alpha\}.
    \]
	The second expression can be obtained by just replacing $\Xj$ with $-\Xj$
	in the first expression. So we discuss only the first expression, that is,
    \begin{equation}
        \max_{\bm\alpha\in B_1\cap B_2\cap H}\Xj^\top\bm\alpha.
        \label{eq:multi_screening_maximum}
    \end{equation}
    
    The Lagrangian function of \eqref{eq:multi_screening_maximum} is defined as
    \[
        \mathcal L(\bm \alpha, \xi_1, \xi_2, \xi_3) = \Xj^\top\bm\alpha
        - \xi_1(\|\bm\alpha - \tilde{\bm\alpha}^{(1)}\|_2^2 - r(R_1)^2)
        - \xi_2(\|\bm\alpha - \tilde{\bm\alpha}^{(2)}\|_2^2 - r(R_2)^2)
        - \xi_3\bm\alpha^\top\bm 1.
    \]
    Then, the optimal solution of \eqref{eq:multi_screening_maximum}, denoted by $\tilde{\bm\alpha}^*$, must satisfy the following conditions:
    \begin{align}
        \nabla_{\bm\alpha}\mathcal L|_{\bm\alpha=\tilde{\bm\alpha}^*} &= \bm 0,\label{eq:optimal_cond_grad}\\
        \xi_1 &\ge 0,\\
        \|\tilde{\bm\alpha}^* - \tilde{\bm\alpha}^{(1)}\|_2^2 - r(R_1)^2 &\le 0,\label{eq:cond_first_feasible_ball}\\
        \xi_1(\|\tilde{\bm\alpha}^* - \tilde{\bm\alpha}^{(1)}\|_2^2 - r(R_1)^2) &= 0,\\
        \xi_2 &\ge 0,\\
        \|\tilde{\bm\alpha}^* - \tilde{\bm\alpha}^{(2)}\|_2^2 - r(R_2)^2 &\le 0,\label{eq:cond_second_feasible_ball}\\
        \xi_2(\|\tilde{\bm\alpha}^* - \tilde{\bm\alpha}^{(2)}\|_2^2 - r(R_2)^2) &= 0,\\
        \bm\alpha^\top\bm 1 &= 0.\label{eq:cond_hyperplane}
    \end{align}
    Note that, if $\xi_1=\xi_2=0$, (i.e., neither \eqref{eq:cond_first_feasible_ball} nor \eqref{eq:cond_second_feasible_ball} are active), then $\max_{\bm\alpha} \mathcal L$ cannot be finite.
    So we can assume that $(\xi_1, \xi_2) \neq (0, 0)$.
    From \eqref{eq:optimal_cond_grad}, we have
    \[
        \nabla_{\bm\alpha} \mathcal L = \Xj 
        - 2\xi_1(\bm\alpha - \tilde{\bm\alpha}^{(1)}) 
        - 2\xi_2(\bm\alpha - \tilde{\bm\alpha}^{(2)})
        - \xi_3\bm 1.
    \]
    Since $\xi_1+\xi_2\neq 0$, we have
    \[
        \tilde{\bm\alpha}^* = \frac{1}{2(\xi_1 + \xi_2)}\left(\Xj 
        + 2\xi_1\tilde{\bm\alpha}^{(1)}
        + 2\xi_2\tilde{\bm\alpha}^{(2)}
        - \xi_3\bm 1\right).
    \]
    Moreover, since $\bm 1^\top\tilde{\bm\alpha}^{(1)}=\bm 1^\top\tilde{\bm\alpha}^{(2)}=0$ from \eqref{eq:cond_hyperplane}, we have
    \begin{align*}
        \frac{1}{2(\xi_1+\xi_2)}\left(\Xj^\top\bm 1 - \xi_3\bm 1^\top\bm 1\right) =0\\
        \therefore \xi_3 = \frac{\Xj^\top\bm 1}{\bm 1^\top\bm 1}.
    \end{align*}
	
	Here we calculate the solution $\tilde{\bm\alpha}^*$ based on the values of $\xi_1$ and $\xi_2$.
	First, if $\xi_1\neq 0$ and $\xi_2 = 0$, (i.e., \eqref{eq:cond_first_feasible_ball} is active but not \eqref{eq:cond_second_feasible_ball}), then we have $\|\bm\alpha - \tilde{\bm\alpha}^{(1)}\|_2^2=r(R_1)^2$ and
    \begin{align*}
        \frac{1}{4\xi_1^2}\|\Xj - \Pi_{\bm 1}(\Xj)\|_2^2 = r(R_1)^2\\
        \therefore \xi_1 = \frac{\|\Xj - \Pi_{\bm 1}(\Xj)\|_2}{r(R_1)},
    \end{align*}
    then
    \[
        \tilde{\bm\alpha}^* = \frac{r(R_1)}{\|\Xj-\Pi_{\bm 1}(\Xj)\|_2}\left(\Xj - \Pi_{\bm 1}(\Xj)\right) + \tilde{\bm\alpha}^{(1)},
    \]
    and the maximized result is calculated as
    \begin{align}
        \mathcal L(\tilde{\bm\alpha}^*, \xi_1, \xi_2, \xi_3) &= \Xj^\top\tilde{\bm\alpha}^*\notag\\
        &= (\Xj - \Pi_{\bm 1}(\Xj))^\top\tilde{\bm\alpha}^*\notag\\
        &= \Xj^\top\tilde{\bm\alpha}^{(1)} + r(R_1)\|\Xj - \Pi_{\bm 1}(\Xj)\|_2. \label{eq:maximum_first_feasible}
    \end{align}
	In this case $\tilde{\bm\alpha}^* \in B_2$ must hold, that is, $\|\tilde{\bm\alpha}^*-\tilde{\bm\alpha}^{(2)}\|_2^2 \le r(R_2)^2$. So we have
    \[
        \frac{\Xj^\top\bm \delta}{\|\Xj - \Pi_{\bm 1}(\Xj)\|_2} \le \frac{r(R_2)^2 - r(R_1)^2 - \|\bm \delta\|_2^2}{2r(R_1)}.
    \]

	If $\xi_1 = 0$ and $\xi_2\neq 0$, the calculation can be done similarly: we can conclude that
    \[
        \tilde{\bm\alpha}^* = \frac{r(R_2)}{\|\Xj-\Pi_{\bm 1}(\Xj)\|_2}\left(\Xj - \Pi_{\bm 1}(\Xj)\right) + \tilde{\bm\alpha}^{(2)}
    \]
    and
    \begin{equation}
        \mathcal L(\tilde{\bm\alpha}^*, \xi_1, \xi_2, \xi_3) = \Xj^\top\tilde{\bm\alpha}^{(2)} + r(R_2)\|\Xj - \Pi_{\bm 1}(\Xj)\|_2. \label{eq:maximum_second_feasible}
    \end{equation}
	Since $\tilde{\bm\alpha}^*\in B_1$, we also have
    \[
        \frac{\Xj^\top\bm \delta}{\|\Xj - \Pi_{\bm 1}(\Xj)\|_2} \ge \frac{r(R_2)^2 - r(R_1)^2 + \|\bm \delta\|_2^2}{2r(R_2)}.
    \]

	Finally we show the case of $\xi_1\neq 0$ and $\xi_2\neq 0$.
	In this case, since $\|\bm\alpha-\tilde{\bm\alpha}^{(1)}\|_2^2 = r(R_1)^2$ and
	$\|\bm\alpha-\tilde{\bm\alpha}^{(2)}\|_2^2 = r(R_2)^2$,
	the constraint can be represented as an intersection of two hyperspheres in $\mathbb R^n$.
	So we replace them with Lemma \ref{lem:intersection_of_two}.
	Let $S_1$ and $S_2$ be the surfaces of $B_1$ and $B_2$, respectively. Then the problem is rewritten as:
	\[
        \max_{\bm\alpha\in S_1\cap S_2 \cap H}\Xj^\top\bm\alpha = \max_{\bm\alpha\in S^\prime\cap H^\prime\cap H}\Xj^\top\bm\alpha,
    \]
	where
    \begin{align*}
        S^\prime &= \{\bm v\in\mathbb R^n\mid \|\bm v - \tilde{\bm\alpha}^\prime\| < r^\prime\},\\
        H^\prime &= \{\bm v\in\mathbb R^n\mid (\bm v - \tilde{\bm\alpha}^\prime)^\top \bm \delta = 0\},
    \end{align*}
	and $\tilde{\bm\alpha}^\prime, r^\prime, \bm \delta$ are the ones defined in Theorem \ref{the:multiple_safe_screening_rule}.
	Its Lagrangian function $\mathcal L^\prime$ is defined as
    \[
        \mathcal L^\prime (\bm\alpha, \xi_1^\prime, \xi_2^\prime, \xi_3^\prime) = \Xj^\top\bm\alpha
        - \xi_1^\prime (\|\bm\alpha - \tilde{\bm\alpha}^\prime\|_2^2 - r^{\prime 2})
        - \xi_2^\prime (\bm \alpha - \tilde{\bm\alpha})^\top \bm \delta
        - \xi_3^\prime \bm \alpha^\top\bm 1,
    \]
	with the optimality conditions
    \begin{align}
        \nabla_{\bm\alpha} \mathcal L|_{\bm\alpha=\tilde{\bm\alpha}^*} &= \bm 0,\label{eq:stationary_intersection}\\
        \|\tilde{\bm\alpha}^* - \tilde{\bm\alpha}^\prime\|_2^2 - r^\prime &= 0,\label{eq:sphere_intersection}\\
        (\tilde{\bm\alpha}^* - \tilde{\bm\alpha}^\prime)^\top \bm \delta &= 0,\label{eq:hyperplane_intersection}\\
        \tilde{\bm\alpha}^{*\top}\bm 1&= 0.\label{eq:hyperplane_intersection_feasible}
    \end{align}
	Noticing that $\xi_1\neq 0$ (otherwise $\max_{\bm\alpha} \mathcal L^\prime$ is not bounded),
	from (\ref{eq:stationary_intersection}) we have
    \[
        \nabla\mathcal L = \Xj - 2\xi_1^\prime(\bm\alpha - \tilde{\bm\alpha}^\prime) - \xi_2^\prime\bm \delta - \xi_3^\prime\bm 1
    \]
    and
    \[
        \tilde{\bm\alpha}^* = \tilde{\bm\alpha}^\prime + \frac{1}{2\xi_1^\prime}\left(\xi_2^\prime\bm \delta + \Xj -\xi_3^\prime\bm 1\right).
    \]
	Since $\bm 1^\top\bm \delta = 0$, from (\ref{eq:hyperplane_intersection_feasible}) we have
    \[
        \xi_3^\prime = \frac{\Xj^\top\bm 1}{\|\bm 1\|_2^2}
    \]
    and from (\ref{eq:hyperplane_intersection}) we have
    \[
        \xi_2^\prime = - \frac{\Xj^\top\bm \delta}{\|\bm \delta\|_2^2}.
    \]
	Then, from (\ref{eq:sphere_intersection}) we have
    \[
        \xi_1^\prime = \frac{\|\Xj  - \Pi_{\bm 1}(\Xj)- \Pi_{\bm \delta}(\Xj)\|_2^2}{2{r}^\prime}.
    \]
    Since $\bm\alpha\in H$ and $\bm\alpha\in H^\prime$,
    we have $\bm\alpha^\top\Pi_{\bm 1}(\Xj)=0$ and
    $\bm\alpha^\top\Pi_{\bm \delta}(\Xj) = \tilde{\bm\alpha}^\top\Pi_{\bm \delta}(\Xj)$,
    respectively. Thus we can conclude that
    \begin{align*}
        \mathcal L^*(\tilde{\bm\alpha}^*, \xi_1^\prime, \xi_2^\prime, \xi_3^\prime)
        &= \Xj^\top\tilde{\bm\alpha}^*\\
        &= (\Xj - \Pi_{\bm 1}(\Xj) - \Pi_{\bm \delta}(\Xj))^\top\tilde{\bm\alpha}^* + \tilde{\bm\alpha}^{\prime\top}\Pi_{\bm 1}(\Xj)\\
        &= \Xj^\top\tilde{\bm\alpha}^\prime + r^\prime \|\Xj - \Pi_{\bm 1}(\Xj) - \Pi_{\bm \delta}(\Xj)\|_2^2.
    \end{align*}
\end{proof}


\section*{Note S4: Algorithms}

We present Algorithms \ref{alg:safe_pattern_pruning} to \ref{alg:pathwise_cv_spp} described in this paper.

\begin{algorithm}[p]
    \begin{algorithmic}
        \REQUIRE{$X, \bm y, \lambda, \kappa, R=(\tilde{\bm\beta}, \tilde{\beta}_0, \tilde{\bm\alpha})$}
        \ENSURE{$\mathcal A$}
        \STATE $\mathcal A\leftarrow\emptyset$
        \STATE $\mathcal P\leftarrow\{\emptyset\}$
        \WHILE{$\mathcal P\neq \emptyset$}
            \STATE Pop from the top of $\mathcal P$ as $p$
            \STATE Enumerate expanded patterns $\mathcal P^\prime$ from $p$
            \FOR{$p_j\in\mathcal P^\prime$}
                \IF{$v_j(R) < \lambda$}
                    \STATE comtinue
                \ENDIF
                \IF{$u_j(R) \ge \lambda$}
                    \STATE $\mathcal A\leftarrow\mathcal A\cup\{j\}$
                \ENDIF
                \STATE Push $p^\prime$ into the top of $\mathcal P$ 
            \ENDFOR
        \ENDWHILE
    \end{algorithmic}
    \caption{Safe Pattern Pruning}
    \label{alg:safe_pattern_pruning}
\end{algorithm}

\begin{algorithm}[p]
    \begin{algorithmic}
        \REQUIRE{$X, \bm y, \{(\lambda^{(k)}, \kappa^{(k)})\}_{k\in[K]}, \epsilon$}
        \ENSURE{$\{\bm\beta^{*(k)}, \beta^{*(k)}_0\}_{k\in[K]}$}
        \STATE $\bm\beta, \beta_0\leftarrow \bm 0, 0$
        \FOR{$k\in[K]$}
            \STATE $\lambda \leftarrow \lambda^{(k)}$
            \STATE $\kappa \leftarrow \kappa^{(k)}$
            \STATE Compute $\bm\alpha$ from $\bm\beta, \beta_0$ by the dual scaling
            \STATE $R \leftarrow (\bm\beta, \beta_0, \bm\alpha)$
            \STATE $\mathcal A\leftarrow\mathrm{SafePatternPruning}(X, \bm y, \lambda, \kappa, R)$
            \WHILE{$true$}
                \STATE Update $\bm\beta, \beta_0$ using the sub-gradient of $P$
                \STATE Copute $\bm\alpha$ from $\bm\beta, \beta_0$ by hte dual scaling
                \STATE $R\leftarrow (\bm\beta, \beta_0, \bm\alpha)$
                \IF{$G(R) < \epsilon$}
                    \STATE $\bm\beta^{*(k)}, \beta^{*(k)}_0\leftarrow \bm\beta, \beta_0$
                    \BREAK
                \ENDIF
                \STATE Remove inactive patterns from $\mathcal A$ using safe screening
            \ENDWHILE
        \ENDFOR
    \end{algorithmic}
    \caption{Pathwise optimization with SPP}
    \label{alg:pathwise_spp}
\end{algorithm}

\begin{algorithm}[p]
    \begin{algorithmic}
        \REQUIRE{$X, \bm y, \{\lambda^{(k)}\}_{k\in[K]}, \{\kappa^{(k^\prime)}\}_{k^\prime\in[K^\prime]}, \epsilon, M$}
        \ENSURE{$\{R^{(k, k^\prime)}\}_{(k, k^\prime)\in[K]\times[K^\prime]}$}
        \FOR{$k\in[K]$}
            \FOR{$k^\prime\in[K^\prime]$}
                \STATE $\lambda\leftarrow\lambda^{(k)}$
                \STATE $\kappa\leftarrow\kappa^{(k^\prime)}$
                \STATE $\mathcal R\leftarrow\emptyset$
                \IF{$k > 1$}
                    \STATE $\mathcal R\leftarrow\mathcal R\cup\{R^{(k-1, k^\prime)}\}$
                \ENDIF
                \IF{$k^\prime > 1$}
                    \STATE $\mathcal R\leftarrow\mathcal R\cup\{R^{(k, k^\prime-1)}\}$
                \ENDIF
                \IF{$\mathcal R = \emptyset$}
                    \STATE $\mathcal R\leftarrow\mathcal R\cup\{(\bm 0, 0, \bm 0)\}$
                \ENDIF
                \FOR{$R\in\mathcal R$}
                    \STATE $\bm\beta, \beta_0, \bm\alpha\leftarrow R$
                    \STATE Update $\bm\alpha$ from $\bm\beta, \beta_0$ by dual scaling
                \ENDFOR
                \STATE $\mathcal A \leftarrow\mathrm{MultiSafePatternPruning}(X, \bm y, \lambda, \kappa, \mathcal R)$
                \FOR{$m\in\{1,2,\ldots\}$}
                    \FOR{$R\in\mathcal R$}
                        \STATE $\bm\beta, \beta_0, \bm\alpha\leftarrow R$
                        \STATE Update $\bm\beta, \beta_0$ of $R$ using the sub-gradient of $P$
                        \STATE Update $\bm\alpha$ from $\bm\beta, \beta_0$ using the dual scaling
                        \STATE $R\leftarrow(\bm\beta, \beta_0, \bm\alpha)$
                    \ENDFOR
                    \IF{$\min_{R\in \mathcal R} G(R) < \epsilon$}
                        \STATE $\bm\beta^{*(k, k^\prime)}, \beta_0^{*(k, k^\prime)}\leftarrow\bm\beta,\beta_0$
                        \BREAK
                    \ENDIF
                    \STATE Remove inactive patterns from $\mathcal A$ using multi safe screening
                    \IF{$m \ge M$}
                        \STATE $\mathcal R \leftarrow \{\argmin_{R\in\mathcal R} G(R)\}$
                    \ENDIF
                \ENDFOR
            \ENDFOR
        \ENDFOR
    \end{algorithmic}
    \caption{Pathwise optimization with multi-reference SPP}
    \label{alg:pathwise_multi_spp}
\end{algorithm}

\begin{algorithm}[p]
    \begin{algorithmic}
        \REQUIRE{$X, \bm y, \{\mathcal I^{(k)}\}_{k\in[K]}, \{\lambda^{(k^\prime)}\}_{k^\prime\in[K^\prime]}, \kappa, \epsilon, M$}
        \ENSURE{$\{R^{(k, k^\prime)}\}_{(k, k^\prime)\in[K]\times[K^\prime]}$}
        \FOR{$k\in[K]$}
            \STATE $X^\prime \leftarrow (X_{\mathcal I^{(k)}}^\top)^\top$
            \STATE $\bm y^\prime \leftarrow \bm y_{\mathcal I^{(k)}}$
            \FOR{$k^\prime\in[K^\prime]$}
                \STATE $\lambda\leftarrow\lambda^{(k^\prime)}$
                \STATE $\mathcal R\leftarrow\emptyset$
                \IF{$k > 1$}
                    \STATE $\mathcal R\leftarrow\mathcal R\cup\{R^{(1, k^\prime)}\}$
                \ENDIF
                \IF{$k^\prime > 1$}
                    \STATE $\mathcal R\leftarrow\mathcal R\cup\{R^{(k, k^\prime-1)}\}$
                \ENDIF
                \IF{$\mathcal R = \emptyset$}
                    \STATE $\mathcal R\leftarrow\mathcal R\cup\{(\bm 0, 0, \bm 0)\}$
                \ENDIF
                \FOR{$R\in\mathcal R$}
                    \STATE $\bm\beta, \beta_0, \bm\alpha\leftarrow R$
                    \STATE Update $\bm\alpha$ from $\bm\beta, \beta_0$ by dual scaling
                \ENDFOR
                \STATE $\mathcal A \leftarrow\mathrm{MultiSafePatternPruning}(X^\prime, \bm y^\prime, \lambda, \kappa, \mathcal R)$
                \FOR{$m\in\{1,2,\ldots\}$}
                    \FOR{$R\in\mathcal R$}
                        \STATE $\bm\beta, \beta_0, \bm\alpha\leftarrow R$
                        \STATE Update $\bm\beta, \beta_0$ of $R$ using the sub-gradient of $P$
                        \STATE Update $\bm\alpha$ from $\bm\beta, \beta_0$ using the dual scaling
                        \STATE $R\leftarrow(\bm\beta, \beta_0, \bm\alpha)$
                    \ENDFOR
                    \IF{$\min_{R\in \mathcal R} G(R) < \epsilon$}
                        \STATE $\bm\beta^{*(k, k^\prime)}, \beta_0^{*(k, k^\prime)}\leftarrow\bm\beta,\beta_0$
                        \BREAK
                    \ENDIF
                    \STATE Remove inactive patterns from $\mathcal A$ using multi safe screening
                    \IF{$m \ge M$}
                        \STATE $\mathcal R \leftarrow \{\argmin_{R\in\mathcal R} G(R)\}$
                    \ENDIF
                \ENDFOR
            \ENDFOR
        \ENDFOR
    \end{algorithmic}
    \caption{Pathwise optimization in cross-validation with multi-reference SPP}
    \label{alg:pathwise_cv_spp}
\end{algorithm}

\end{document}